\documentclass{article}


\usepackage[final]{neurips_2024}




\usepackage[utf8]{inputenc} 
\usepackage[T1]{fontenc}    
\usepackage{hyperref}       
\usepackage{url}            
\usepackage{booktabs}       
\usepackage{amsfonts}       
\usepackage{nicefrac}       
\usepackage{microtype}      
\usepackage{xcolor}         
\usepackage{amsthm, amssymb, amsfonts, latexsym, mathtools}
\usepackage{algorithm}
\usepackage[noend]{algpseudocode}
\usepackage{comment}
\usepackage{here}
\usepackage{caption}
\usepackage{subcaption}
\usepackage{wrapfig}
\usepackage{multirow}
\usepackage{color}
\usepackage{wrapfig}
\usepackage{threeparttable}

\newtheorem{theorem}{Theorem}
\newtheorem{proposition}{Proposition}
\newtheorem{lemma}{Lemma}

\newtheorem{assumption}{Assumption}


\usepackage{amsmath,amsfonts,bm}









\def\eqref#1{(\ref{#1})}









\def\1{\bm{1}}








\def\vx{{\bm{x}}}
\def\vy{{\bm{y}}}



\DeclareMathAlphabet{\mathsfit}{\encodingdefault}{\sfdefault}{m}{sl}
\SetMathAlphabet{\mathsfit}{bold}{\encodingdefault}{\sfdefault}{bx}{n}













\DeclareMathOperator*{\argmin}{arg\,min}

\bibliographystyle{apalike}

\title{Parameter-free Clipped Gradient Descent Meets Polyak}

%

\author{%
  Yuki Takezawa$^{1,2}$, Han Bao$^{1,2}$, Ryoma Sato$^{3}$, Kenta Niwa$^{4}$, Makoto Yamada$^{2}$ \\
  $^1$Kyoto University, $^2$OIST, $^3$NII, $^4$NTT Communication Science Laboratories
}

\begin{document}

\maketitle

\begin{abstract}
Gradient descent and its variants are de facto standard algorithms for training machine learning models.
As gradient descent is sensitive to its hyperparameters, we need to tune the hyperparameters carefully using a grid search. 
However, the method is time-consuming, particularly when multiple hyperparameters exist.
Therefore, recent studies have analyzed parameter-free methods that adjust the hyperparameters on the fly.
However, the existing work is limited to investigations of parameter-free methods for the stepsize, and parameter-free methods for other hyperparameters have not been explored.
For instance, although the gradient clipping threshold is a crucial hyperparameter in addition to the stepsize for preventing gradient explosion issues,
none of the existing studies have investigated parameter-free methods for clipped gradient descent.
Therefore, in this study, we investigate the parameter-free methods for clipped gradient descent.
Specifically, we propose Inexact Polyak Stepsize, which converges to the optimal solution without any hyperparameters tuning, and its convergence rate is asymptotically independent of $L$ under $L$-smooth and $(L_0, L_1)$-smooth assumptions of the loss function, similar to that of clipped gradient descent with well-tuned hyperparameters. 
We numerically validated our convergence results using a synthetic function and demonstrated the effectiveness of our proposed methods using LSTM, Nano-GPT, and T5.
\end{abstract}

\section{Introduction}
\label{sec:introduction}
We consider the convex optimization problem:
\begin{align}
\label{eq:convex_optimization}
    \min_{\vx \in \mathbb{R}^d} f (\vx),
\end{align}
where the loss function $f$ is convex and lower bounded.
In this setting, gradient descent and its variants \citep{duchi2011adaptive,kingma2015adam} are the de facto standard algorithms to minimize the loss function.
The performance of the algorithm is highly sensitive to the hyperparameter settings, necessitating the careful tuning of the hyperparameters to achieve best performance.
More specifically, when the loss function is $L$-smooth, gradient descent can achieve the optimal convergence rate $\mathcal{O}(\tfrac{L \| \vx_0 - \vx^\star \|^2}{T})$ when we set the stepsize to $\tfrac{1}{L}$ where $\vx_0$ is the initial parameter and $\vx^\star$ is the optimal solution \citep{nesterov2018lectures}.
Unfortunately, parameter $L$ is problem-specific and unavailable in practice.
Thus, gradient descent must be executed in many times with different hyperparameters to identify the good hyperparameter settings,
which is a very time-consuming process. 
Notably, when multiple hyperparameters are under consideration, this hyperparameter search becomes computationally more demanding.

Several recent studies have examined \textit{parameter-free methods} for tuning hyperparameters on the fly \citep{berrada2020training,defazio2023learning,orvieto2022dynamics,jiang2023adaptive,ivgi2023dog,khaled2023dowg,orabona2017training,carmon2022making}.\footnote{We use \textit{parameter-free methods} to describe algorithms that provably converge to the optimal solution without any problem-specific parameters for setting their stepsizes and other hyperparameters.}
These methods automatically adjust the stepsize during the training and are guaranteed to converge to the optimal solution without tuning the stepsize.
In other words, the stepsize did not require tuning using the grid search.
However, the existing parameter-free methods only focus on the stepsize,
and parameter-free methods for other hyperparameters have not been explored.
For example, in addition to the stepsize, the gradient clipping threshold is an important hyperparameter for training language models \citep{pascanu2013on,zhang2020improved,zhang2020why,zhang2020adaptive}.

Clipped gradient descent can achieve the convergence rate $\mathcal{O} (\tfrac{L_0 \| \vx_0 - \vx^\star\|^2}{T})$ under the assumption that the loss function is $(L_0, L_1)$-smooth when we use the optimal stepsize and gradient clipping threshold \citep{koloskova2023revisiting}.
In many cases, $L_0$ is significantly smaller than $L$ \citep{zhang2020why}.
Thus, by comparing with the convergence rate of gradient descent $\mathcal{O}(\tfrac{L \| \vx_0 - \vx^\star \|^2}{T})$, 
gradient clipping often allows gradient descent to converge faster.
However, we must carefully tune two hyperparameters, stepsize and gradient clipping threshold, to achieve this convergence rate.
If the gradient clipping threshold is too large, the gradient clipping fails to accelerate the convergence.
Moreover, if the gradient clipping threshold is too small, gradient clipping deteriorates rather than accelerating the convergence rate.
\textit{Can we develop a parameter-free method whose convergence rate is asymptotically independent of $L$ under $(L_0, L_1)$-smoothness?}

In this study, we investigate a parameter-free method for clipped gradient descent.
First, we provide the better convergence rate of Polyak stepsize \citep{polyak1987introduction} under $(L_0, L_1)$-smoothness.
We discover that the convergence rate of Polyak stepsize matches that of clipped gradient descent with well-tuned stepsize and gradient clipping threshold.
Although the convergence rate of Polyak stepsize is asymptotically independent of $L$ under $(L_0, L_1)$-smooth assumption as clipped gradient descent, it still requires the minimum loss value, which is a problem-specific value.
Thus, we make Polyak stepsize parameter-free without losing this property under $(L_0, L_1)$-smoothness by proposing \textbf{Inexact Polyak Stepsize},
which converges to the optimal solution without any problem-specific parameters.
We numerically evaluated Inexact Polyak Stepsize using a synthetic function and neural networks,
validating our theory and demonstrating the effectiveness of Inexact Polyak Stepsize.

\section{Preliminary}
\subsection{Gradient descent \& $L$-smoothness}
One of the most fundamental algorithms for solving Eq.~\eqref{eq:convex_optimization} represents the gradient descent:
\begin{align*}
    \vx_{t+1} = \vx_{t} - \eta_t \nabla f (\vx_t),
\end{align*}
where $\vx_0 \in \mathbb{R}^d$ is the initial parameter and $\eta_t > 0$ is the stepsize at $t$-th iteration.
To ensure that gradient descent converges to the optimal solution quickly, we must carefully tune the stepsize $\eta_t$.
When the stepsize is too large, the training collapses.
By contrast, when the stepsize is too small, the convergence rate becomes too slow.
Thus, we must search for a proper stepsize as the following theorem indicates.

\begin{assumption}[$L$-smoothness]
\label{assumption:smooth}
There exists a constant $L > 0$ that satisfies the following for all $\vx, \vy \in \mathbb{R}^d$:
\begin{equation}
    \| \nabla f (\vx) - \nabla f (\vy) \| \leq L \| \vx - \vy \|.
\end{equation}
\end{assumption}

\begin{theorem}[{\citet[Corollary 2.1.2]{nesterov2018lectures}}]
\label{theorem:gradient_descent}
Assume that $f$ is convex and $L$-smooth, and there exists an optimal solution $\vx^\star \coloneqq \argmin_{\vx \in \mathbb{R}^d} f (\vx)$.
Then, gradient descent with stepsize $\eta_t = \tfrac{1}{L}$ satisfies
\begin{equation}
    f (\bar{\vx}) - f (\vx^\star) \leq \mathcal{O} \left( \frac{L \| \vx_0 - \vx^\star \|^2}{T} \right),
\end{equation}
where $\bar{\vx} \coloneqq \tfrac{1}{T}\sum_{t=0}^{T-1} \vx_t$ and $T$ is the number of iterations.
\end{theorem}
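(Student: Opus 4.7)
The plan is to follow the classical three-step proof from Nesterov's lectures: turn $L$-smoothness into a descent lemma, combine it with convexity to obtain a one-step contraction of the iterate-to-optimum distance, then telescope and average via Jensen's inequality.

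First, I would derive the descent lemma. Assumption~\ref{assumption:smooth} implies the standard quadratic upper bound $f(\vy) \leq f(\vx) + \langle \nabla f(\vx), \vy - \vx\rangle + \tfrac{L}{2}\|\vy - \vx\|^2$. Evaluating at $\vy = \vx_{t+1} = \vx_t - \tfrac{1}{L}\nabla f(\vx_t)$ gives
\[
f(\vx_{t+1}) \leq f(\vx_t) - \tfrac{1}{2L}\|\nabla f(\vx_t)\|^2,
\]
so $\{f(\vx_t)\}$ is monotonically non-increasing and the squared-gradient terms are controlled by successive function-value differences.

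Second, I would expand $\|\vx_{t+1} - \vx^\star\|^2 = \|\vx_t - \vx^\star\|^2 - \tfrac{2}{L}\langle \nabla f(\vx_t), \vx_t - \vx^\star\rangle + \tfrac{1}{L^2}\|\nabla f(\vx_t)\|^2$, invoke convexity via $\langle \nabla f(\vx_t), \vx_t - \vx^\star\rangle \geq f(\vx_t) - f(\vx^\star)$, and substitute the descent-lemma estimate $\|\nabla f(\vx_t)\|^2 \leq 2L(f(\vx_t) - f(\vx_{t+1}))$. The two function-value terms combine into the clean one-step recursion
\[
\tfrac{2}{L}\bigl(f(\vx_{t+1}) - f(\vx^\star)\bigr) \leq \|\vx_t - \vx^\star\|^2 - \|\vx_{t+1} - \vx^\star\|^2.
\]

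Third, I would telescope from $t = 0$ to $T-1$, dropping the non-negative term $\|\vx_T - \vx^\star\|^2$, to get $\sum_{t=1}^T (f(\vx_t) - f(\vx^\star)) \leq \tfrac{L}{2}\|\vx_0 - \vx^\star\|^2$. Adding the initial-gap bound $f(\vx_0) - f(\vx^\star) \leq \tfrac{L}{2}\|\vx_0 - \vx^\star\|^2$ (which follows from the smoothness upper bound at $(\vx^\star, \vx_0)$ since $\nabla f(\vx^\star) = 0$) and using monotonicity of $f(\vx_t)$ then yields $\sum_{t=0}^{T-1}(f(\vx_t) - f(\vx^\star)) \leq L\|\vx_0 - \vx^\star\|^2$. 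Applying Jensen's inequality to the convex $f$ at $\bar{\vx} = \tfrac{1}{T}\sum_{t=0}^{T-1}\vx_t$ completes the $\mathcal{O}(L\|\vx_0 - \vx^\star\|^2 / T)$ rate.

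Since this is a textbook result of Nesterov, there is no genuine obstacle. The only point deserving mild care is index alignment: the telescoping naturally bounds $\sum_{t=1}^T$, but $\bar{\vx}$ averages indices $0,\ldots,T-1$; the initial-gap bound (or, equivalently, replacing $\bar{\vx}$ by $\tfrac{1}{T}\sum_{t=1}^T \vx_t$) dispatches this discrepancy without affecting the rate.
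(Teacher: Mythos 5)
Your proof is correct and is precisely the standard argument behind the cited result; the paper itself offers no proof of this theorem, simply deferring to Nesterov's Corollary 2.1.2, and your three steps (descent lemma, one-step contraction via convexity, telescoping plus Jensen) reproduce that textbook derivation faithfully, including the careful handling of the index shift with the initial-gap bound $f(\vx_0)-f(\vx^\star)\leq \tfrac{L}{2}\|\vx_0-\vx^\star\|^2$.
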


\subsection{Clipped gradient descent \& $(L_0, L_1)$-smoothness}
\label{sec:clipped_gradient_descent}
Gradient clipping is widely used to stabilize and accelerate the training of gradient descent \citep{pascanu2013on,devlin2019bert}.
Let $c > 0$ be the threshold for gradient clipping. Clipped gradient descent is given by:
\begin{align}
    \vx_{t+1} = \vx_t - \eta_t \min \left\{1, \frac{c}{\| \nabla f (\vx_t)\|} \right\} \nabla f(\vx_t).
\end{align}
Many prior studies investigated the theoretical benefits of gradient clipping \citep{koloskova2023revisiting,zhang2020improved,zhang2020why,zhang2020adaptive,li2022high,sadiev2023high}.
\citet{zhang2020why} experimentally found that the gradient Lipschitz constant decreases during the training of various neural networks and is highly correlated with gradient norm $\| \nabla f (\vx) \|$.
To describe this phenomenon, \citet{zhang2020improved} introduced a novel smoothness assumption called $(L_0, L_1)$-smoothness. Then, it has been experimentally demonstrated that the local gradient Lipschitz constant $L_0$ is thousands of times smaller than the global gradient Lipschitz constant $L$.

\begin{assumption}[$(L_0,L_1)$-smoothness]
\label{assumption:generalized_smooth}
There exists constants $L_0 > 0$ and $L_1 > 0$ that satisfy the following for all $\vx, \vy \in \mathbb{R}^d$ with $\| \vx - \vy \| \leq \frac{1}{L_1}$:
\begin{equation}
    \| \nabla f (\vx) - \nabla f (\vy) \| \leq (L_0 + L_1 \| \nabla f (\vx) \|) \| \vx - \vy \|.
\end{equation}
\end{assumption}

Note that $(L_0, L_1)$-smoothness is strictly weaker than $L$-smoothness because $(L_0, L_1)$-smoothness covers $L$-smoothness by taking $L_1=0$.
Using the $(L_0, L_1)$-smoothness assumption, the convergence rate of clipped gradient descent was established as follows.
\begin{theorem}[{\citet[Theorem 2.3]{koloskova2023revisiting}}]
\label{theorem:clip}
Assume that $f$ is convex, $L$-smooth, and $(L_0, L_1)$-smooth, and there exists an optimal solution $\vx^\star \coloneqq \argmin_{\vx \in \mathbb{R}^d} f (\vx)$.
Then, clipped gradient descent with $\eta_t = \tfrac{1}{L_0}$ and $c = \tfrac{L_0}{L_1}$ satisfies:
\begin{equation}
    f(\bar{\vx}) - f (\vx^\star) \leq \mathcal{O} \left( \frac{ L_0 \| \vx_0 - \vx^\star\|^2}{T} + \frac{L L_1^2 \| \vx_0 - \vx^\star \|^4}{T^2}\right),
\end{equation}
where $\bar{\vx} \coloneqq \tfrac{1}{T}\sum_{t=0}^{T-1} \vx_t$ and $T$ is the number of iterations.
\end{theorem}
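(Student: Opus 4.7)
The plan is to follow the standard convex-optimization template: track the squared distance $\|\vx_t - \vx^\star\|^2$ to the optimum, combine it with convexity, and then telescope and apply Jensen's inequality. The twist imposed by clipped gradient descent is that the effective stepsize $\tilde{\eta}_t \coloneqq \eta_t \min\{1, c/\|\nabla f(\vx_t)\|\}$ takes two distinct forms, so the one-step analysis must be carried out case by case depending on whether clipping is active.

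First I would establish the descent inequality appropriate to $(L_0, L_1)$-smoothness. Integrating Assumption~\ref{assumption:generalized_smooth} along the segment $[\vx, \vy]$ yields, for all $\vy$ with $\|\vy - \vx\| \leq 1/L_1$,
\begin{equation*}
f(\vy) \leq f(\vx) + \langle \nabla f(\vx), \vy - \vx\rangle + \tfrac{1}{2}(L_0 + L_1\|\nabla f(\vx)\|)\,\|\vy - \vx\|^2.
\end{equation*}
Next I would verify that with the prescribed choices $\eta_t = 1/L_0$ and $c = L_0/L_1$, the step $\vx_{t+1} - \vx_t = -\tilde{\eta}_t \nabla f(\vx_t)$ always satisfies the radius condition $\|\vx_{t+1}-\vx_t\| \leq 1/L_1$: in the clipped regime its magnitude equals $\eta_t c = 1/L_1$, and in the unclipped regime it is $\|\nabla f(\vx_t)\|/L_0 \leq (L_0/L_1)/L_0 = 1/L_1$, so the descent lemma applies at every iteration.

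Equipped with this, I would expand $\|\vx_{t+1} - \vx^\star\|^2$ and use convexity $\langle \nabla f(\vx_t), \vx_t - \vx^\star\rangle \geq f(\vx_t) - f(\vx^\star)$ to obtain
\begin{equation*}
\|\vx_{t+1} - \vx^\star\|^2 \leq \|\vx_t - \vx^\star\|^2 - 2\tilde{\eta}_t\bigl(f(\vx_t) - f(\vx^\star)\bigr) + \tilde{\eta}_t^2\|\nabla f(\vx_t)\|^2.
\end{equation*}
In the unclipped case ($\|\nabla f(\vx_t)\|\leq c$) I would use $L$-smoothness together with convexity to bound $\|\nabla f(\vx_t)\|^2 \leq 2L(f(\vx_t) - f(\vx^\star))$; this is the channel through which the second-order $L L_1^2 \|\vx_0-\vx^\star\|^4/T^2$ correction eventually enters. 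In the clipped case ($\|\nabla f(\vx_t)\| > c$) the residual term $\tilde{\eta}_t^2\|\nabla f(\vx_t)\|^2$ collapses to $1/L_1^2$, which must be absorbed using the $(L_0,L_1)$-descent lemma so that each such iteration still contributes decrease of order $\|\nabla f(\vx_t)\|/L_1$. Summing the resulting one-step inequality over $t=0,\dots,T-1$ telescopes $\|\vx_t - \vx^\star\|^2$, and Jensen's inequality converts $\frac{1}{T}\sum_t (f(\vx_t)-f^\star)$ into $f(\bar{\vx}) - f(\vx^\star)$.

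The main obstacle is stitching the two regimes into a single recursion whose coefficient in front of the suboptimality $f(\vx_t)-f(\vx^\star)$ is bounded below by a uniform quantity of order $1/\max(L_0, L_1\|\nabla f(\vx_t)\|)$. Controlling the contribution of the clipped iterations is delicate because each of them can move at most $1/L_1$ in parameter space regardless of the current suboptimality, so a naive bound would lose a factor depending on the gradient norm; it is precisely this accounting that forces the higher-order term to carry the global constant $L$ via $\|\nabla f(\vx_t)\|^2 \leq 2L(f(\vx_t)-f^\star)$, producing the $L L_1^2 \|\vx_0 - \vx^\star\|^4/T^2$ piece. Once that bookkeeping is done, the remaining manipulations are routine telescoping and rearrangement.
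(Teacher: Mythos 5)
First, a framing point: the paper does not prove Theorem \ref{theorem:clip} at all — it is imported verbatim from \citet[Theorem 2.3]{koloskova2023revisiting} — so there is no in-paper proof to match. The closest in-paper analogue is the proof of Theorem \ref{theorem:main} in Sec.~\ref{sec:proof_of_main}, which follows exactly the template you describe: split on $\|\nabla f(\vx_t)\| \lessgtr L_0/L_1$, expand the distance recursion with convexity, telescope. Your skeleton is therefore correct, and your observation that the step length never exceeds $1/L_1$ (so the $(L_0,L_1)$ descent lemma always applies) is a genuine and necessary ingredient. However, two of your concrete steps would fail as written. (a) In the \emph{unclipped} case you propose $\|\nabla f(\vx_t)\|^2 \le 2L(f(\vx_t)-f^\star)$; substituting this with $\tilde\eta_t = 1/L_0$ leaves a coefficient $-\tfrac{2}{L_0} + \tfrac{2L}{L_0^2} \ge 0$ in front of $f(\vx_t)-f^\star$, i.e.\ no descent (and possible growth when $L \gg L_0$). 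The $L$-smoothness inequality belongs to the \emph{clipped} iterations, where it is used as $\tfrac{f(\vx_t)-f^\star}{\|\nabla f(\vx_t)\|} \ge \sqrt{(f(\vx_t)-f^\star)/(2L)}$ to turn the per-step progress $\tfrac{1}{L_1\|\nabla f(\vx_t)\|}(f(\vx_t)-f^\star)$ into a summable quantity; that is the sole source of the $LL_1^2\|\vx_0-\vx^\star\|^4/T^2$ term (your closing paragraph states this correctly, contradicting the earlier attribution). In the unclipped case one must stay within the $(L_0,L_1)$ world via Lemma \ref{lemma:generalized_smooth}, and even then $\tfrac{1}{L_0^2}\|\nabla f\|^2 \le \tfrac{4}{L_0}(f-f^\star)$ exceeds the $-\tfrac{2}{L_0}(f-f^\star)$ gain, so the bare convexity expansion does not close at $\eta=1/L_0$; one needs cocoercivity, a descent-lemma recursion on function values, or a constant-factor-smaller stepsize (which the $\mathcal{O}(\cdot)$ absorbs).

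(b) The concluding ``telescope and apply Jensen'' step is not routine: the unclipped iterations control $\sum_{t}(f(\vx_t)-f^\star)$ while the clipped ones only control $\sum_t \sqrt{f(\vx_t)-f^\star}$ (up to $L_1\sqrt{L}$ factors), so the two partial sums are inhomogeneous and cannot be merged into $\tfrac1T\sum_t(f(\vx_t)-f^\star)$ for Jensen. The paper's analogous Lemma handles this by applying $a^2 \ge 2ab-b^2$ with an optimized $b$ to homogenize everything into $\sum_t\sqrt{f(\vx_t)-f^\star}$, which naturally yields a guarantee for $\min_t f(\vx_t)$ rather than for $f(\bar\vx)$; some such recombination device is missing from your plan.
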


When the number of iterations $T$ is large, the first term $\mathcal{O} (\tfrac{L_0 \| \vx_0 - \vx^\star \|^2}{T})$ becomes dominant, and the convergence rate of clipped gradient descent is asymptotically independent of $L$.
Gradient clipping allows for the use of a larger stepsize, and thus, gradient descent converges faster because of $L_0 \ll L$.
We can interpret $L \simeq L_0 + L_1 \sup_\vx \| \nabla f (\vx) \|$.
The stepsize of gradient descent in Theorem \ref{theorem:gradient_descent} is $\tfrac{1}{L_0 + L_1 \sup_\vx \| \nabla f (\vx) \|}$, which is typically very small.
By comparing with gradient descent,
the coefficient multiplied by the gradient of clipped gradient descent in Theorem \ref{theorem:clip} is $\min \{ \tfrac{1}{L_0}, \tfrac{1}{L_1 \| \nabla f (\vx_t)\|}\}$,
which is larger than $\tfrac{1}{L_0 + L_1 \sup_\vx \| \nabla f (\vx) \|}$.
Specifically, when parameter $\vx$ is close to the optimal solution $\vx^\star$ (i.e., $\| \nabla f (\vx) \|$ is small),
clipped gradient descent can use a larger stepsize and then reach the optimal solution faster than gradient descent.

\subsection{Polyak stepsize}

When $f$ is convex, $\vx_{t+1}$ and $\vx_t$ generated by gradient descent satisfy $\| \vx_{t+1} - \vx^\star \|^2 \leq \| \vx_t - \vx^\star \|^2 - 2 \eta_t (f (\vx_t) - f (\vx^\star)) + \eta_t^2 \| \nabla f (\vx_t) \|^2$.
By minimizing the right-hand side, we can derive well-known Polyak stepsize \citep{polyak1987introduction}:
\begin{align}
\label{eq:polyak}
    \eta_t = \frac{f (\vx_t) - f^\star}{\| \nabla f (\vx_t)\|^2},
\end{align}
where $f^\star \coloneqq f (\vx^\star)$.
When $f$ is $L$-smooth, gradient descent with Polyak stepsize converges to the optimal solution as quickly as gradient descent with $\eta_t = \tfrac{1}{L}$.

\begin{theorem}[{\citet[Theorem 1]{hazan2019revisiting}}]
\label{theorem:gd}
Assume that $f$ is convex and $L$-smooth, and there exists an optimal solution $\vx^\star \coloneqq \argmin_{\vx \in \mathbb{R}^d} f (\vx)$. 
Then, gradient descent with Polyak stepsize Eq.~\eqref{eq:polyak} satisfies:
\begin{equation}
    f (\bar{\vx}) - f (\vx^\star) \leq \mathcal{O} \left( \frac{L \| \vx_0 - \vx^\star \|^2}{T} \right),
\end{equation}
where $\bar{\vx} \coloneqq \tfrac{1}{T}\sum_{t=0}^{T-1} \vx_t$ and $T$ is the number of iterations.
\end{theorem}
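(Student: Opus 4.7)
The plan is to chain together three standard ingredients: the convex one-step inequality for gradient descent, the Polyak stepsize choice (which is derived precisely to minimize the right-hand side of that inequality), and the $L$-smoothness bound $\|\nabla f(\vx_t)\|^2 \leq 2L(f(\vx_t) - f^\star)$. The last ingredient is what converts a squared functional gap into a linear one so that telescoping produces the $\mathcal{O}(L/T)$ rate.

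First, I would start from the identity $\| \vx_{t+1} - \vx^\star \|^2 = \| \vx_t - \vx^\star \|^2 - 2 \eta_t \langle \nabla f(\vx_t), \vx_t - \vx^\star\rangle + \eta_t^2 \|\nabla f(\vx_t)\|^2$ and apply convexity to replace the inner product by $f(\vx_t) - f^\star$, giving
\begin{equation*}
    \| \vx_{t+1} - \vx^\star \|^2 \leq \| \vx_t - \vx^\star \|^2 - 2 \eta_t (f(\vx_t) - f^\star) + \eta_t^2 \|\nabla f(\vx_t)\|^2.
\end{equation*}
Plugging in the Polyak choice $\eta_t = (f(\vx_t) - f^\star)/\|\nabla f(\vx_t)\|^2$ collapses the last two terms into a single negative contribution:
\begin{equation*}
    \| \vx_{t+1} - \vx^\star \|^2 \leq \| \vx_t - \vx^\star \|^2 - \frac{(f(\vx_t) - f^\star)^2}{\|\nabla f(\vx_t)\|^2}.
\end{equation*}

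Second, I would invoke $L$-smoothness, which implies the standard bound $\|\nabla f(\vx_t)\|^2 \leq 2L (f(\vx_t) - f^\star)$ (this follows by applying the descent lemma at the point $\vx_t - \tfrac{1}{L}\nabla f(\vx_t)$ and comparing with $f^\star$). Substituting into the denominator gives the clean recursion
\begin{equation*}
    \| \vx_{t+1} - \vx^\star \|^2 \leq \| \vx_t - \vx^\star \|^2 - \frac{f(\vx_t) - f^\star}{2L}.
\end{equation*}

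Finally, I would rearrange to $f(\vx_t) - f^\star \leq 2L (\| \vx_t - \vx^\star \|^2 - \| \vx_{t+1} - \vx^\star \|^2)$, sum over $t = 0, \ldots, T-1$ so that the right-hand side telescopes to at most $2L \| \vx_0 - \vx^\star\|^2$, divide by $T$, and apply Jensen's inequality to the convex $f$ to move the average inside: $f(\bar{\vx}) - f^\star \leq \tfrac{1}{T}\sum_t (f(\vx_t) - f^\star) \leq \tfrac{2L \|\vx_0 - \vx^\star\|^2}{T}$. There is no real obstacle in this argument; the only subtlety is ensuring that $\|\nabla f(\vx_t)\| > 0$ so the Polyak stepsize is well-defined, which is handled by noting that $\nabla f(\vx_t) = 0$ already implies $f(\vx_t) = f^\star$ and the iteration can terminate (or $\eta_t$ can be set arbitrarily without affecting the telescoped bound).
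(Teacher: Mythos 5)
Your proof is correct and is essentially the argument the paper relies on: the paper cites this result from Hazan and Kakade rather than reproving it, but your chain of the convex one-step inequality, the Polyak substitution collapsing the last two terms to $-\bigl(f(\vx_t)-f^\star\bigr)^2/\|\nabla f(\vx_t)\|^2$, the smoothness bound $\|\nabla f(\vx_t)\|^2 \leq 2L\bigl(f(\vx_t)-f^\star\bigr)$ (the paper's Lemma~\ref{lemma:smooth}), telescoping, and Jensen is exactly the machinery the paper itself deploys in its Appendix proof of Theorem~\ref{theorem:main}. Your closing remark about the degenerate case $\nabla f(\vx_t)=0$ is also handled correctly.
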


In addition to the $L$-smooth setting, Polyak stepsize is known to cause gradient descent to converge to the optimal solution with the optimal rate among various settings, e.g., non-smooth convex, smooth convex, and strongly convex settings \citep{hazan2019revisiting}.

\section{Improved convergence result of Polyak stepsize}
\label{sec:improved_convergence_result}

Before proposing a parameter-free method for clipped gradient descent, 
in this section, we present a new convergence analysis of Polyak stepsize under $(L_0, L_1)$-smoothness.
Surprisingly, our new analysis reveals that Polyak stepsize achieves exactly the same convergence rate as \textit{clipped} gradient descent with appropriate hyperparameters.
A bunch of prior studies established the convergence rates of Polyak stepsize,
and it is well-known that Polyak stepsize allows gradient descent to converge as fast as the optimal stepsize.
However, our theorem finds that Polyak stepsize achieves a faster convergence rate than gradient descent with the optimal stepsize as clipped gradient descent, and none of the existing studies have found this favorable property of Polyak stepsize.

\subsection{Connection between Polyak stepsize and clipped gradient descent}
\label{sec:relationship}

Under $(L_0, L_1)$-smoothness, we can obtain the following results.
\begin{proposition}
\label{proposition:connection}
Assume that $f$ is convex and $(L_0, L_1)$-smooth. 
Then, Polyak stepsize Eq.~\eqref{eq:polyak} satisfies:
\begin{equation}
    \min\left\{ \frac{1}{4 L_0}, \frac{1}{4 L_1 \| \nabla f (\vx_t) \|} \right\} \leq \frac{f (\vx_t) - f^\star}{\| \nabla f (\vx_t)\|^2}.
\end{equation}
\end{proposition}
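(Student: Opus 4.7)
The plan is to show the standard Polyak inequality of the form
\[
  f(\vx_t) - f^\star \;\geq\; \frac{\|\nabla f(\vx_t)\|^2}{2\bigl(L_0 + L_1\|\nabla f(\vx_t)\|\bigr)},
\]
and then rewrite the right-hand side using the identity $\frac{1}{a+b}\geq\frac{1}{2\max\{a,b\}} = \min\!\bigl\{\tfrac{1}{2a},\tfrac{1}{2b}\bigr\}$ with $a=L_0$ and $b=L_1\|\nabla f(\vx_t)\|$ to recover exactly the bound in the proposition.

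To get the displayed PL-type inequality, I would first establish a local descent lemma under Assumption~\ref{assumption:generalized_smooth}: for any $\vx,\vy\in\mathbb{R}^d$ with $\|\vy-\vx\|\leq 1/L_1$,
\[
  f(\vy) \;\leq\; f(\vx) + \langle \nabla f(\vx),\vy-\vx\rangle + \frac{L_0 + L_1\|\nabla f(\vx)\|}{2}\|\vy-\vx\|^2.
\]
This is the analogue of the standard smoothness descent lemma and follows from integrating $\nabla f(\vx + s(\vy-\vx))$ along the segment $[\vx,\vy]$ and applying the $(L_0,L_1)$-smoothness inequality on that segment (the constraint $\|\vy-\vx\|\leq 1/L_1$ is what makes that inequality available). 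The same lemma is used in \citet{koloskova2023revisiting}, so I would either cite it or prove it in a few lines.

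Next, I would apply the descent lemma with the one-step move $\vy = \vx_t - \eta\nabla f(\vx_t)$ and choose $\eta = 1/(L_0 + L_1\|\nabla f(\vx_t)\|)$. A quick check shows $\|\vy - \vx_t\| = \eta\|\nabla f(\vx_t)\| \leq 1/L_1$, so the descent lemma applies. Substituting this $\eta$ yields
\[
  f(\vy) \;\leq\; f(\vx_t) - \frac{\|\nabla f(\vx_t)\|^2}{2\bigl(L_0 + L_1\|\nabla f(\vx_t)\|\bigr)}.
\]
Since $f^\star \leq f(\vy)$, rearranging gives the PL-type inequality written at the start. Convexity is not actually needed for this step; the assumption is inherited from the setup of the paper.

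Finally, I apply the elementary bound $\frac{1}{a+b}\geq \min\bigl\{\tfrac{1}{2a},\tfrac{1}{2b}\bigr\}$ (equivalent to $a+b\leq 2\max\{a,b\}$) with $a=L_0$, $b=L_1\|\nabla f(\vx_t)\|$, divide both sides of the PL-type inequality by $\|\nabla f(\vx_t)\|^2$, and recognize the left-hand side as the Polyak stepsize in Eq.~\eqref{eq:polyak}. This yields the claimed bound. The only nontrivial step is the descent lemma and the verification that the auxiliary step stays within the $1/L_1$ ball; the rest is algebraic manipulation, so I do not anticipate a real obstacle beyond being careful that the $(L_0,L_1)$-smoothness is invoked only on admissible segments.
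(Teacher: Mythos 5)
Your proposal is correct and follows essentially the same route as the paper: both rest on the PL-type inequality $f(\vx_t)-f^\star \geq \tfrac{\|\nabla f(\vx_t)\|^2}{2(L_0+L_1\|\nabla f(\vx_t)\|)}$ (which the paper simply cites as Lemma A.2 of Koloskova et al., while you offer to re-derive it via the generalized descent lemma), and both finish with the same elementary observation that $\tfrac{1}{2(a+b)}\geq \min\{\tfrac{1}{4a},\tfrac{1}{4b}\}$, which the paper phrases as a two-case split on whether $\|\nabla f(\vx_t)\|$ exceeds $L_0/L_1$. No gaps.
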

\begin{proof}
Assumption \ref{assumption:generalized_smooth} and Lemma \ref{lemma:generalized_smooth} imply 
\begin{align*}
    \frac{f (\vx_t) - f^\star}{\| \nabla f (\vx_t)\|^2} \geq \frac{1}{2 (L_0 + L_1 \| \nabla f (\vx_t) \|)}.
\end{align*}
When $\| \nabla f (\vx_t) \| < \tfrac{L_0}{L_1}$, Polyak stepsize is bounded from below by $\tfrac{1}{4 L_0}$.
When $\| \nabla f (\vx_t) \| \geq \tfrac{L_0}{L_1}$, we have
\begin{align*}
    \frac{1}{2 (L_0 + L_1 \| \nabla f (\vx_t) \|)}
    \geq \frac{1}{4 L_1 \| \nabla f (\vx_t) \|}.
\end{align*}
Therefore, we can conclude the statement.
\end{proof}

Under $L$-smoothness, the lower bound of Polyak stepsize was obtained as follows.
\begin{proposition}[{\citet[Lemma 15]{jiang2023adaptive}}]
\label{proposition:lower_bound}
Assume that $f$ is convex and $L$-smooth.
Then, Polyak stepsize Eq.~\eqref{eq:polyak} satisfies:
\begin{equation}
    \frac{1}{2 L} \leq \frac{f (\vx_t) - f^\star}{\| \nabla f (\vx_t)\|^2}.    
\end{equation}
\end{proposition}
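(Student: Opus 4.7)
The plan is to derive the lower bound directly from the descent lemma that is implied by $L$-smoothness together with the global optimality of $f^\star$. The convexity assumption does not actually enter the argument; only $L$-smoothness plus the fact that $f^\star \le f(\vy)$ for every $\vy$ is needed. This mirrors the structure of Proposition \ref{proposition:connection} but uses the simpler $L$-smoothness bound rather than the $(L_0,L_1)$-smooth version.

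First, I would invoke the standard descent lemma: for an $L$-smooth function, for all $\vx, \vy \in \mathbb{R}^d$,
\begin{equation*}
    f(\vy) \le f(\vx) + \langle \nabla f(\vx), \vy - \vx \rangle + \tfrac{L}{2} \| \vy - \vx \|^2.
\end{equation*}
Specializing to $\vx = \vx_t$ and $\vy = \vx_t - \tfrac{1}{L} \nabla f(\vx_t)$ (the one-step gradient descent iterate with stepsize $1/L$), the inner-product term becomes $-\tfrac{1}{L} \| \nabla f(\vx_t) \|^2$ and the quadratic term becomes $\tfrac{1}{2L} \| \nabla f(\vx_t) \|^2$. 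Combining them gives
\begin{equation*}
    f(\vy) \le f(\vx_t) - \tfrac{1}{2L} \| \nabla f(\vx_t) \|^2.
\end{equation*}

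Next, since $\vx^\star$ is a global minimizer, $f^\star = f(\vx^\star) \le f(\vy)$, and substituting this into the above inequality yields $f(\vx_t) - f^\star \ge \tfrac{1}{2L} \| \nabla f(\vx_t) \|^2$. Dividing both sides by $\| \nabla f(\vx_t) \|^2$ (assuming the gradient is nonzero; otherwise $\vx_t$ is already optimal and the claim is vacuous) gives exactly the required bound.

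There is no real obstacle here: the proof is a two-line consequence of the descent lemma and the definition of $f^\star$. The only subtlety worth noting is that the statement as written in the proposition assumes convexity, but convexity plays no role in the argument above, so a tighter formulation only needs $L$-smoothness plus lower-boundedness of $f$ by $f^\star$. I would therefore present the proof exactly as the descent-lemma-plus-global-minimum argument and briefly remark on the nonzero-gradient caveat.
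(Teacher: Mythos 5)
Your proof is correct and is exactly the standard descent-lemma argument behind the result the paper cites (Lemma 15 of \citet{jiang2023adaptive}, equivalently Lemma \ref{lemma:smooth} via Lemma 2.28 of \citet{garrigos2023handbook}); the paper itself offers no independent proof, only the citation. Your observation that convexity is not needed---only the Lipschitz-gradient condition and $f^\star \le f(\vy)$ for all $\vy$---is also accurate.
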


By comparing Propositions \ref{proposition:connection} and \ref{proposition:lower_bound},
Proposition \ref{proposition:connection} shows that Polyak stepsize does not become excessively small when the parameter approaches the optimal solution (i.e., $\| \nabla f (\vx)\|$ approaches zero), similar to clipped gradient descent.
If we choose the stepsize and gradient clipping threshold as in Theorem \ref{theorem:clip}, clipped gradient descent can be written as follows:
\begin{align}
    \vx_{t+1} = \vx_t - \min \left\{\frac{1}{L_0}, \frac{1}{L_1 \| \nabla f (\vx_t)\|} \right\} \nabla f(\vx_t).
\end{align}
Thus, Proposition \ref{proposition:connection} implies that Polyak stepsize can be regarded as internally estimating the hyperparameters for clipped gradient descent, as shown in Theorem \ref{theorem:clip}.

\subsection{Convergence analysis of Polyak stepsize under $(L_0, L_1)$-smoothness}
Based on the relationship between Polyak stepsize and clipped gradient descent in Sec.~\ref{sec:relationship}, we provide a new convergence result for Polyak stepsize under $(L_0, L_1)$-smoothness.
The proof is deferred to Sec.~\ref{sec:proof_of_main}.
\begin{theorem}
\label{theorem:main}
Assume that $f$ is convex, $L$-smooth, and $(L_0, L_1)$-smooth, and there exists an optimal solution $\vx^\star \coloneqq \argmin_{\vx \in \mathbb{R}^d} f (\vx)$.
Let $T$ be the number of iterations and define $\tau \coloneqq \argmin_{0\leq t \leq T-1} f(\vx_t)$. Then, gradient descent with Polyak stepsize Eq.~\eqref{eq:polyak} satisfies:
\begin{equation}
    f(\vx_{\tau}) - f (\vx^\star) \leq \mathcal{O} \left( \frac{ L_0 \| \vx_0 - \vx^\star\|^2}{T} + \frac{L L_1^2 \| \vx_0 - \vx^\star \|^4}{T^2}\right).
\end{equation}
\end{theorem}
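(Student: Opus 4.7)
The plan is to combine the standard Polyak-stepsize descent inequality with the lower bound provided by Proposition~\ref{proposition:connection}, and then follow a two-case argument that mirrors the analysis of clipped gradient descent.

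First I would write down the well-known one-step inequality for Polyak stepsize,
\begin{equation*}
\|\vx_{t+1} - \vx^\star\|^2 \leq \|\vx_t - \vx^\star\|^2 - \frac{(f(\vx_t) - f^\star)^2}{\|\nabla f(\vx_t)\|^2}.
\end{equation*}
Rewriting the subtracted term as $\tfrac{f(\vx_t)-f^\star}{\|\nabla f(\vx_t)\|^2}\cdot (f(\vx_t)-f^\star)$ and applying Proposition~\ref{proposition:connection} yields
\begin{equation*}
\|\vx_{t+1} - \vx^\star\|^2 \leq \|\vx_t - \vx^\star\|^2 - \min\!\left\{\tfrac{1}{4L_0},\; \tfrac{1}{4L_1 \|\nabla f(\vx_t)\|}\right\} (f(\vx_t) - f^\star).
\end{equation*}
Telescoping from $t=0$ to $T-1$ then bounds the aggregated progress by $\|\vx_0 - \vx^\star\|^2$.

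Next I would split the iteration indices into the small-gradient set $\mathcal{S} \coloneqq \{t : \|\nabla f(\vx_t)\| < L_0/L_1\}$, where the minimum equals $\tfrac{1}{4L_0}$, and the large-gradient set $\mathcal{L}$ with $|\mathcal{S}| + |\mathcal{L}| = T$. For $t \in \mathcal{L}$ I would invoke the standard $L$-smooth plus convexity inequality $\|\nabla f(\vx_t)\|^2 \leq 2L(f(\vx_t) - f^\star)$ to turn $\tfrac{f(\vx_t)-f^\star}{\|\nabla f(\vx_t)\|}$ into a lower bound of the form $\tfrac{1}{\sqrt{2L}}\sqrt{f(\vx_t) - f^\star}$. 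Combined with the telescoped inequality and $f(\vx_\tau) \leq f(\vx_t)$, this produces
\begin{equation*}
\frac{|\mathcal{S}|}{4L_0} (f(\vx_\tau) - f^\star) + \frac{|\mathcal{L}|}{4L_1\sqrt{2L}} \sqrt{f(\vx_\tau) - f^\star} \leq \|\vx_0 - \vx^\star\|^2.
\end{equation*}

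Finally I would conclude by case analysis on which of $|\mathcal{S}|$ and $|\mathcal{L}|$ is at least $T/2$: in the first case I drop the square-root term and solve to get the $L_0\|\vx_0 - \vx^\star\|^2/T$ rate; in the second case I drop the linear term and solve the resulting quadratic in $\sqrt{f(\vx_\tau) - f^\star}$ to get the $LL_1^2\|\vx_0 - \vx^\star\|^4/T^2$ rate. Adding the two cases absorbs both contributions into the claimed bound.

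I expect the main obstacle to be justifying the step that turns $(f(\vx_t)-f^\star)/\|\nabla f(\vx_t)\|$ into $\sqrt{f(\vx_t)-f^\star}/\sqrt{2L}$ on $\mathcal{L}$: this is where full $L$-smoothness (not just $(L_0,L_1)$-smoothness) is needed, and it explains why $L$ appears in the second term. The partitioning and the case split are clean, and the rest of the argument is essentially algebraic.
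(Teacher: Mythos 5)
Your proposal is correct and follows essentially the same route as the paper: the same one-step Polyak inequality, the same split into small- and large-gradient iterations via Proposition~\ref{proposition:connection}, and the same use of $\|\nabla f(\vx_t)\|^2 \le 2L\,(f(\vx_t)-f^\star)$ on the large-gradient set. The only difference is the final aggregation: you lower-bound each $f(\vx_t)-f^\star$ by $f(\vx_\tau)-f^\star$ and apply a pigeonhole argument on $|\mathcal{S}|$ versus $|\mathcal{L}|$, whereas the paper keeps the per-iterate quantities, converts the small-gradient sum into a sum of square roots via $a^2 \ge 2ab - b^2$, and bounds $\sqrt{f(\vx_\tau)-f^\star}$ by the average over all $T$ iterates; your endgame is slightly cleaner and yields the same $\mathcal{O}$-rate (with a marginally larger constant on the $T^{-2}$ term).
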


By comparing Theorem \ref{theorem:main} with Theorem \ref{theorem:clip}, 
the convergence rate of Polyak stepsize is the same as that of clipped gradient descent.
Thus, Polyak stepsize can converge faster than the optimal stepsize given in Theorem \ref{theorem:gradient_descent} when $L_0 \ll L$.
Many prior studies analyzed the convergence rate of Polyak stepsize and discussed the relationship between Polyak stepsize and gradient descent with the optimal stepsize \citep{polyak1987introduction,loizou2021stochastic,galli2023donot,berrada2020training}.
However, they only recognized Polyak stepsize as making gradient descent converge with the same convergence rate as the optimal stepsize, and none of the prior studies have found this relationship between Polyak stepsize and clipped gradient descent.
Our new convergence result is the first to discover that the Polyak stepsize can achieve the same convergence rate not only as gradient descent with an appropriate stepsize but also as clipped gradient descent with an appropriate stepsize and gradient clipping threshold.

\section{Making clipped gradient descent parameter-free}
\begin{algorithm}[b!]
\caption{Inexact Polyak Stepsize}
\label{alg:proposed_method}
\begin{algorithmic}[1]
\State \textbf{Input:} The number of iterations $T$ and lower bound $l^\star$.
\State $f^\text{best}, \vx^\text{best} \leftarrow f (\vx_0), \vx_0$.
\For{$t = 0, 1, \cdots, T-1$}
\State $\vx_{t+1} \leftarrow \vx_t - \tfrac{f(\vx_t) - l^\star}{\sqrt{T} \| \nabla f (\vx_t) \|^2} \nabla f (\vx_t)$.
\If{$f(\vx_{t+1}) \leq f^\text{best}$}
\State $f^\text{best}, \vx^\text{best} \leftarrow f (\vx_{t+1}), \vx_{t+1}$.\EndIf
\EndFor
\State \Return $\vx^\text{best}$.
\end{algorithmic}
\end{algorithm}

In the previous section, we found that the convergence rate of Polyak stepsize is asymptotically independent of $L$ under $(L_0, L_1)$-smoothness as clipped gradient descent with appropriate hyperparameters.
However, Polyak stepsize requires the minimum loss value $f^\star$, which is a problem-specific parameter.
In this section, we propose a method that can remove the prior knowledge of $f^\star$ from Polyak stepsize without losing the property of asymptotic independence of $L$ under $(L_0, L_1)$-smoothness.

\subsection{Inexact Polyak Stepsize}
To make Polyak stepsize parameter-free, several prior studies have proposed the use of lower bound of $f^\star$ instead of $f^\star$ \citep{loizou2021stochastic,orvieto2022dynamics,jiang2023adaptive}.
The loss functions commonly used in machine learning models are non-negative.
Thus, the lower bound of $f^\star$ is trivially obtained as zero and is not a problem-specific parameter.
By utilizing this lower bound, a straightforward approach to make Polyak stepsize independent of problem-specific parameters is replacing $f^\star$ in Polyak stepsize with the lower bound $l^\star$ as follows:
\begin{align}
\label{eq:straightforward}
    \eta_t = \frac{f(\vx_t) - l^\star}{\| \nabla f (\vx_t) \|^2}.
\end{align}
However, the stepsize in Eq.~\eqref{eq:straightforward} becomes excessively large as the parameter approaches the optimal solution,
and it does not lead to the optimal solution \citep{loizou2021stochastic}.
This is because $\| \nabla f (\vx_t) \|$ approaches zero, while $f (\vx_t) - l^\star$ approaches $f^\star - l^\star (> 0)$,
which makes the stepsize in Eq.~\eqref{eq:straightforward} excessively large as the parameter approaches the optimal solution.
To mitigate this issue, DecSPS \citep{orvieto2022dynamics} and AdaSPS \citep{jiang2023adaptive}, which are parameter-free methods based on Polyak stepsize that use $l^\star$ instead of $f^\star$, make the stepsize monotonically non-increasing to converge to the optimal solution.

However, making the stepsize monotonically non-increasing loses the fruitful property that the convergence rate of Polyak stepsize is asymptotically independent of $L$ as clipping gradient descent under $(L_0, L_1)$-smoothness.
This is because Polyak stepsize and clipped gradient descent make the convergence rate asymptotically independent of $L$ by increasing the stepsize when the parameter approaches the optimal solution.
In fact, we evaluated DecSPS and AdaSPS with a synthetic function in Sec.~\ref{sec:synthetic},
demonstrating that the convergence deteriorates as $L$ increases.

To address this issue, we propose \textbf{Inexact Polyak Stepsize}, whose details are described in Alg.~\ref{alg:proposed_method}.
As discussed above, we cannot make the stepsize decrease to maintain the asymptotic independence of $L$ under $(L_0, L_1)$-smoothness.
Thus, we set the stepsize as follows:
\begin{equation}
\label{eq:inexact_polyak}
    \eta_t = \frac{f(\vx_t) - l^\star}{\sqrt{T} \| \nabla f (\vx_t) \|^2},
\end{equation}
where $T$ denotes the number of iterations.
Instead of making the stepsize decrease, we propose returning the parameter for which the lowest loss is achieved as the final parameter.

\subsection{Convergence analysis of Inexact Polyak Stepsize}
\begin{table}[!b]
\vskip - 0.2 in
\centering
\caption{Summary of convergence rates of parameter-free methods based on Polyak stepsize. All convergence results are the ones under convex, $L$-smoothness, and $(L_0, L_1)$-smoothness. We define $D_T \coloneqq \max_{0 \leq t \leq T} \| \vx_t - \vx^\star\|$.}
\label{table:convergence_rate}
\vskip 0.05 in
\resizebox{\linewidth}{!}{
\begin{tabular}{lcc}
\toprule
Algorithm & Convergence Rate & Assumption \\
\midrule
DecSPS \citep{orvieto2022dynamics}$^{(a)}$ & $\mathcal{O} \left( \tfrac{\max \{ L, \eta_0^{-1} \}D_T^2 + \sigma^2 }{\sqrt{T}} \right)$ & \ref{assumption:smooth}$^{\quad}$ \\
AdaSPS \citep{jiang2023adaptive}$^{(a)}$ & $\mathcal{O} \left( \tfrac{L D_T^2 \sigma}{\sqrt{T}} + \tfrac{L^2 D_T^4}{T} \right)$ & \ref{assumption:smooth}$^{\quad}$ \\
Inexact Polyak Stepsize (This work) &   $\mathcal{O} \left( \frac{ L_0 \| \vx_0 - \vx^\star\|^2 + \sigma^2}{\sqrt{T}} + \frac{L L_1^2 \| \vx_0 - \vx^\star \|^4}{T} + \frac{L_1^2 L \sigma^4}{L_0^2 T} \right)$ & \ref{assumption:smooth}, \ref{assumption:generalized_smooth}$^{(b)}$ \\
\bottomrule
\end{tabular}}
  \begin{tablenotes}
      {\scriptsize
        \item (a) We present the convergence rates of DecSPS and AdaSPS in the deterministic setting to compare DecSPS, AdaSPS, and Inexact Polyak Stepsize in the same deterministic setting, while \citet{orvieto2022dynamics} and \citet{jiang2023adaptive} also analyzed the rate rates in the stochastic setting.
        \item (b) If $f$ is $L$-smooth, $f$ is $(L_0, L_1)$-smooth because $(L_0, L_1)$-smoothness assumption is strictly weaker than $L$-smoothness assumption.
    }
    \end{tablenotes}
\vskip - 0.2 in
\end{table}
The following theorem provides the convergence rate of Inexact Polyak Stepsize.
The proof is deferred to Sec.~\ref{sec:proof_of_lower_bound}.
\begin{theorem}
\label{theorem:main_lower_bound}
Assume that $f$ is convex, $L$-smooth, and $(L_0, L_1)$-smooth, and there exists an optimal solution $\vx^\star \coloneqq \argmin_{\vx\in\mathbb{R}^d} f (\vx)$.
Let $T$ be the number of iterations and $\sigma^2 \coloneqq f^\star - l^\star$.
Then, $\vx$ generated by Alg.~\ref{alg:proposed_method} satisfies:
\begin{equation}
\label{eq:rate_of_inexact_polyak}
    f(\vx) - f (\vx^\star) \leq \mathcal{O} \left( \frac{ L_0 \| \vx_0 - \vx^\star\|^2 + \sigma^2}{\sqrt{T}} + \frac{L L_1^2 \| \vx_0 - \vx^\star \|^4}{T} + \frac{L_1^2 L \sigma^4}{L_0^2 T} \right).
\end{equation}
\end{theorem}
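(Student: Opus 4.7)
\textbf{Plan for Theorem \ref{theorem:main_lower_bound}.} Write $g_t \coloneqq f(\vx_t) - f(\vx^\star)$, so that $f(\vx_t) - l^\star = g_t + \sigma^2$ and $\eta_t = (g_t+\sigma^2)/(\sqrt{T}\|\nabla f(\vx_t)\|^2)$. The plan is to replay the Case~A / Case~B dissection that proves Theorem \ref{theorem:main}, but with a $1/\sqrt{T}$ attenuation of the Polyak progress and an additive residual produced by the inexactness of $l^\star$. Starting from the standard convex descent inequality $\|\vx_{t+1}-\vx^\star\|^2 \le \|\vx_t-\vx^\star\|^2 - 2\eta_t g_t + \eta_t^2\|\nabla f(\vx_t)\|^2$ and substituting the stepsize, a short manipulation rewrites the right-hand side as $\|\vx_t-\vx^\star\|^2 - (g_t+\sigma^2)[g_t(2\sqrt{T}-1)-\sigma^2]/(T\|\nabla f(\vx_t)\|^2)$; after dropping nonnegative cross terms this simplifies to $\|\vx_{t+1}-\vx^\star\|^2 \le \|\vx_t-\vx^\star\|^2 - g_t^2/(\sqrt{T}\|\nabla f(\vx_t)\|^2) + \sigma^4/(T\|\nabla f(\vx_t)\|^2)$, where the first subtracted piece is the Polyak progress slowed by a factor $\sqrt{T}$ and the last term is the price of replacing $f^\star$ by $l^\star$.

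Next I would dispose of small-gap iterates. If at some step $g_t \le c\sigma^2/\sqrt{T}$ for an absolute constant $c$, the iterate returned by Alg.~\ref{alg:proposed_method} already honors the claim through the $\sigma^2/\sqrt{T}$ term. Otherwise $g_t > c\sigma^2/\sqrt{T}$ at every $t$, which is exactly the threshold needed for the bracket $g_t(2\sqrt{T}-1)-\sigma^2$ to dominate $\sigma^4$, so that the one-step change is strictly negative and $\|\vx_t-\vx^\star\|\le \|\vx_0-\vx^\star\|$ is preserved throughout the trajectory. Under this standing assumption I would partition the horizon into $A\coloneqq\{t:\|\nabla f(\vx_t)\|\le L_0/L_1\}$ and $B\coloneqq\{t:\|\nabla f(\vx_t)\|> L_0/L_1\}$ and invoke Proposition \ref{proposition:connection}: for $t\in A$, $g_t^2/\|\nabla f(\vx_t)\|^2\ge g_t/(4L_0)$, and telescoping gives $\sum_{t\in A}g_t\lesssim L_0\sqrt{T}\,\|\vx_0-\vx^\star\|^2$, hence $\min_{t\in A} g_t\lesssim L_0\|\vx_0-\vx^\star\|^2/\sqrt{T}$ when $|A|\ge T/2$; for $t\in B$, chaining Proposition \ref{proposition:connection} with $\|\nabla f(\vx_t)\|^2\le 2Lg_t$ from $L$-smoothness yields $g_t^2/\|\nabla f(\vx_t)\|^2\ge \sqrt{g_t}/(4L_1\sqrt{2L})$, so $\sum_{t\in B}\sqrt{g_t}\lesssim L_1\sqrt{LT}\,\|\vx_0-\vx^\star\|^2$ and $\min_{t\in B} g_t\lesssim LL_1^2\|\vx_0-\vx^\star\|^4/T$ when $|B|\ge T/2$. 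Since $|A|+|B|=T$, one of these two regimes is always active, and together they give the $L_0 \|\vx_0-\vx^\star\|^2/\sqrt{T}$ and $LL_1^2\|\vx_0-\vx^\star\|^4/T$ contributions in the statement.

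The remaining step is to absorb the residual $\sigma^4/(T\|\nabla f(\vx_t)\|^2)$ accumulated along the trajectory, which is where the extra $LL_1^2\sigma^4/(L_0^2 T)$ term arises. In Case~B the uniform lower bound $\|\nabla f(\vx_t)\|^2\ge L_0^2/L_1^2$ telescopes this residual to $O(L_1^2\sigma^4/L_0^2)$, and when this enlargement of the effective budget is propagated through the Case~B telescoping, together with the extra factor $L$ contributed by $\|\nabla f\|^2\le 2Lg_t$, it delivers precisely the $LL_1^2\sigma^4/(L_0^2 T)$ contribution. The main obstacle will be controlling the same residual on the Case~A iterates, where $\|\nabla f(\vx_t)\|$ admits no uniform lower bound and the naive estimate only yields a $\sigma^2/T^{1/4}$ dependence in place of the desired $\sigma^2/\sqrt{T}$. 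The natural cure is to retain the cross term $2\sigma^2(\sqrt{T}-1)g_t$ that was discarded in the one-step simplification and combine it with the Case~A consequence $\|\nabla f(\vx_t)\|^2\le 4L_0 g_t$ of Lemma \ref{lemma:generalized_smooth}, so that on $A$ the residual is dominated by a constant fraction of the augmented progress whenever $g_t\ge c\sigma^2/\sqrt{T}$, and the $\sigma^2/\sqrt{T}$ dependence in the final bound is preserved.
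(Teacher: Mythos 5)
Your plan retraces the paper's own proof almost step for step: the same one-step inequality obtained by substituting the stepsize into the convex descent bound, the same dichotomy on whether some iterate already satisfies $f(\vx_t)-f^\star\lesssim \sigma^2/\sqrt{T}$, the same partition of the horizon by $\|\nabla f(\vx_t)\|\lessgtr L_0/L_1$, and the same two ingredients (Lemma~\ref{lemma:generalized_smooth} on set $A$, Lemma~\ref{lemma:smooth} combined with Lemma~\ref{lemma:generalized_smooth} on set $B$). The only structural deviations are cosmetic: the paper divides the one-step inequality by $\eta_t$ so that the inexactness of $l^\star$ appears as a uniform additive $\sigma^2/\sqrt{T}$ in \emph{function-value} units, and merges the two index sets via the $a^2\ge 2ab-b^2$ trick, whereas you keep the residual in \emph{distance-squared} units and use the pigeonhole observation that one of $A,B$ contains at least $T/2$ indices. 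The pigeonhole combination is harmless and gives the same rate.

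The one step that does not deliver what you claim is the propagation of the Case~$B$ residual. Your per-step residual is $\sigma^4/(T\|\nabla f(\vx_t)\|^2)\le L_1^2\sigma^4/(L_0^2T)$, which after telescoping enlarges the budget to $\|\vx_0-\vx^\star\|^2+O(L_1^2\sigma^4/L_0^2)$; pushing this through $\sum_{t\in B}\sqrt{g_t}\lesssim L_1\sqrt{LT}\,(\text{budget})$ and squaring the resulting bound on $\min_{t\in B}\sqrt{g_t}$ produces a term of order $L_1^6L\sigma^8/(L_0^4T)$, \emph{not} $L_1^2L\sigma^4/(L_0^2T)$ — and the former is not dominated by the latter unless $\sigma^2\lesssim L_0/L_1^2$. (The paper avoids this because its residual enters additively at the $\sqrt{g_t}$ scale, as $L_1\sqrt{L}\sigma^2/(L_0\sqrt{T})$, before the final squaring.) Fortunately the repair is already in your toolkit: the cross term $(2\sqrt{T}-2)\sigma^2 g_t$ that you retain to rescue Case~$A$ satisfies $(2\sqrt{T}-2)\sigma^2 g_t\ge\sigma^4$ on \emph{every} iterate under the standing assumption $g_t\ge c\sigma^2/\sqrt{T}$ (for $T\ge 4$ and $c\ge 1$; small $T$ is absorbed into the $g_t^2$ term by taking $c$ a slightly larger absolute constant). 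Applying it uniformly kills the $\sigma^4/(T\|\nabla f(\vx_t)\|^2)$ residual on both cases, leaving the clean progress inequality $g_t^2/(\sqrt{T}\|\nabla f(\vx_t)\|^2)\le \|\vx_t-\vx^\star\|^2-\|\vx_{t+1}-\vx^\star\|^2$, and the rest of your argument then yields the claimed bound (indeed without needing the third term at all). With that correction the proposal is sound.
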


\paragraph{Asymptotic independence of $L$:}
When the number of iterations $T$ is large, only the first term $\mathcal{O} (\tfrac{L_0 \| \vx_0 - \vx^\star\|^2 + \sigma^2}{\sqrt{T}})$ becomes dominant in the convergence rate, which does not depend on $L$.
Thus, Theorem \ref{theorem:main_lower_bound} shows that Inexact Polyak Stepsize successfully inherits the favorable property of Polyak stepsize under $(L_0, L_1)$-smoothness.
In addition to Inexact Polyak Stepsieze, DecSPS \citep{orvieto2022dynamics} and AdaSPS \citep{jiang2023adaptive} have been proposed as parameter-free methods that use $l^\star$ instead of $f^\star$ in Polyak stepsize.
However, these prior methods fail to inherit the favorable property of Polyak stepsize, and their convergence rates deteriorate when $L$ is large because these methods decrease the stepsize during the training.
In fact, we evaluated DecSPS and AdaSPS with a synthetic function in Sec.~\ref{sec:synthetic}, 
demonstrating that convergence rates of DecSPS and AdaSPS are degraded when $L$ becomes large, whereas the convergence rate of Inexact Polyak Stepsize does not depend on $L$.

\paragraph{Removing dependence on $D_T$:}
The convergence rates of DecSPS and AdaSPS depend on $D_T (\coloneqq \max_{0 \leq t \leq T} \| \vx_t - \vx^\star\|)$.
Thus, strictly speaking, these convergence rates cannot show that DecSPS and AdaSPS converge to the optimal solution because $D_T$ may increase as the number of iterations $T$ increases.
For instance, if $D_T$ increase with $\Omega(T^{\frac{1}{4}})$, the convergence rate of AdaSPS is $\mathcal{O}(L \sigma + L^2)$,
which does not show that AdaSPS converges to the optimal solution.
In contrast, the convergence rate in Eq.~\eqref{eq:rate_of_inexact_polyak} depends on only $\| \vx_0 - \vx^\star\|$.
Theorem \ref{theorem:main_lower_bound} indicates that Inexact Polyak Stepsize converges to the optimal solution.

\paragraph{Convergence rate with respect to $T$:}
Inexact Polyak Stepsize successfully achieves the asymptotic independence of $L$, while it slows down the convergence rate with respect to the number of iterations $T$ by comparing clipped gradient descent with proper hyperparameters.
The convergence rate of Inexact Polyak Stepsize $\mathcal{O}(\tfrac{L_0}{\sqrt{T}})$ is not optimal in terms of $T$, and there may be room to improve this rate. 
For instance, the adaptive methods proposed by \citet{hazan2019revisiting} might be used to alleviate this issue.
However, the parameter-free methods for clipped gradient descent have not been explored well in the existing studies. 
We believe that Inexact Polyak Stepsize is the important first step for developing parameter-free clipped gradient descent.

\section{Related work}

\paragraph{Gradient clipping:}
Gradient clipping was initially proposed to mitigate the gradient explosion problem for training RNN and LSTM \citep{mikolov2010recurrent,merity2018regularizing} and is now widely used to accelerate and stabilize the training not only for RNN and LSTM, but also for various machine learning models, especially language models \citep{devlin2019bert,raffel2019exploring}.
Recently, many studies have investigated the theoretical benefits of gradient clipping and analyzed the convergence rate of clipped gradient descent under (1) $(L_0, L_1)$-smoothness assumption \citep{koloskova2023revisiting,zhang2020improved,zhang2020why} and (2) heavy-tailed noise assumption \citep{zhang2020adaptive,li2022high,sadiev2023high}.
(1) \citet{zhang2020why} found that the local gradient Lipschitz constant is correlated with the gradient norm. To describe this phenomenon, \citet{zhang2020why}, \citet{zhang2020improved}, and \citet{koloskova2023revisiting} introduced the new assumption, $(L_0, L_1)$-smoothness,
providing the convergence rate of clipped gradient descent under $(L_0, L_1)$-smoothness.
Then, they showed that gradient clipping can improve the convergence rate of gradient descent,
as we introduced in Sec.~\ref{sec:clipped_gradient_descent}.
(2) Besides $(L_0, L_1)$-smoothness, \citet{zhang2020adaptive} pointed out that the distribution of stochastic gradient noise is heavy-tailed for language models.
Then, it has been shown that gradient clipping can make the stochastic gradient descent robust against the heavy-tailed noise of stochastic gradient \citep{li2022high,sadiev2023high,zhang2020adaptive}.

\paragraph{Parameter-free methods:}
Hyperparameter-tuning is one of the most time-consuming tasks for training machine learning models.
To alleviate this issue, many parameter-free methods that adjust the stepsize on the fly have been proposed, e.g.,
Polyak-based stepsize \citep{berrada2020training,hazan2019revisiting,loizou2021stochastic,mukherjee2023locally,orvieto2022dynamics,jiang2023adaptive}, AdaGrad-based methods \citep{ivgi2023dog,khaled2023dowg}, and Dual Averaging-based methods \citep{orabona2017training,defazio2023learning}.
However, parameter-free methods for hyperparameters, except for stepsizes, have not been studied.
In this work, we studied the parameter-free methods for two hyperparameters, the stepsize and gradient clipping threshold,
and then proposed Inexact Polyak Stepsize, which converges to the optimal solution without tuning any hyperparameters and its convergence rate is asymptotically independent of $L$ as clipped gradient descent with well-tuned hyperparameters.

\section{Numerical evaluation}
In this section, we evaluate our theory numerically.
In Sec.~\ref{sec:synthetic}, we evaluate Polyak stepsize and Inexact Polyak Stepsize using a synthetic function,
varying that their convergence rates are asymptotically independent of $L$.
In Sec.~\ref{section:neural_networks}, we show the results obtained using neural networks.
\begin{figure}[t!]
\centering
\vskip -0.2 in
\begin{subfigure}[b]{0.325\textwidth}
    \centering
    \includegraphics[width=\textwidth]{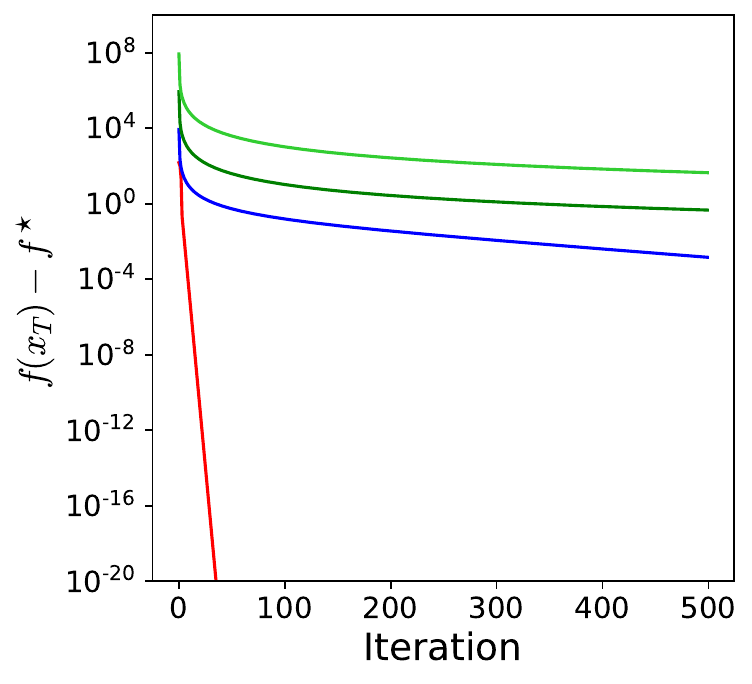}
    \vskip - 0.1 in
    \caption{Gradient Descent}
\end{subfigure}
\hfill
\begin{subfigure}[b]{0.325\textwidth}
    \centering
    \includegraphics[width=\textwidth]{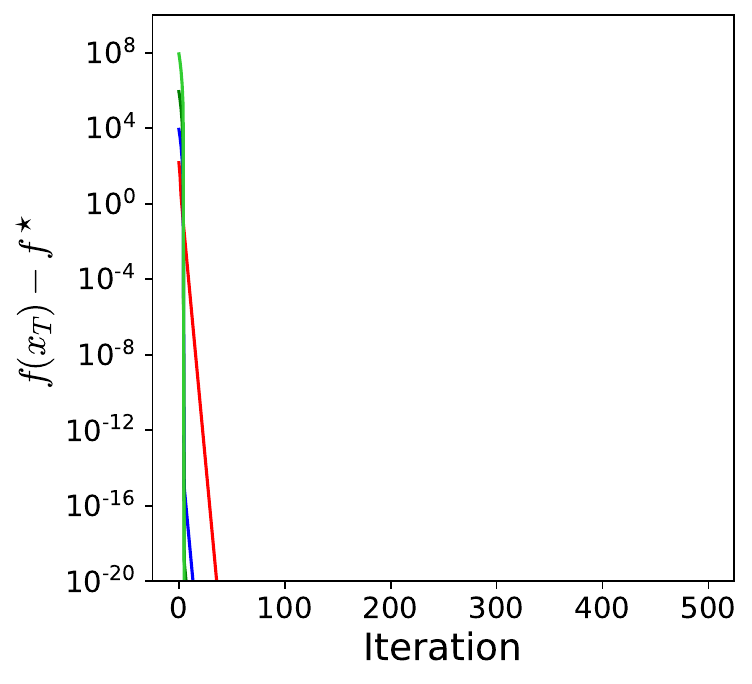}
    \vskip - 0.1 in
    \caption{Clipped Gradient Descent}
\end{subfigure}
\begin{subfigure}[b]{0.325\textwidth}
    \centering
    \includegraphics[width=\textwidth]{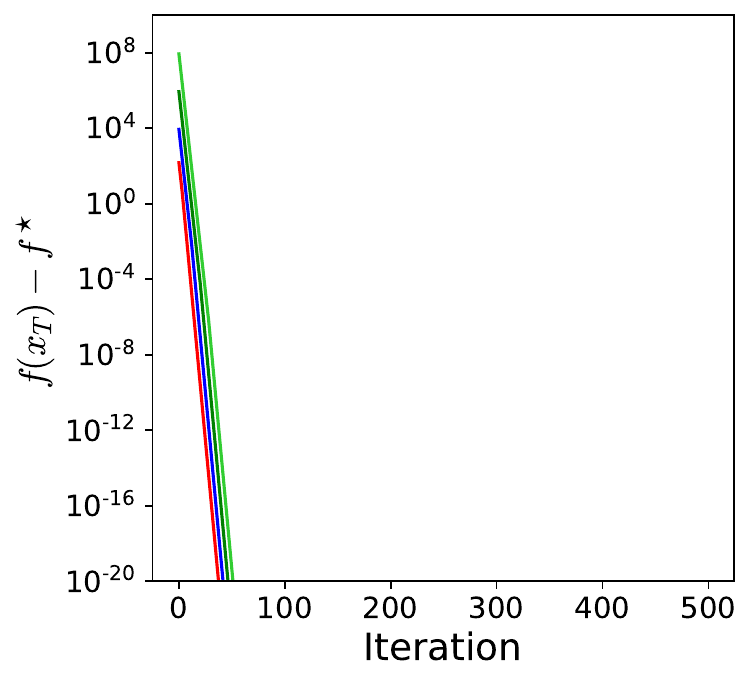}
    \vskip - 0.1 in
    \caption{Polyak Stepsize}
\end{subfigure}
\begin{subfigure}[b]{0.325\textwidth}
    \centering
    \includegraphics[width=\textwidth]{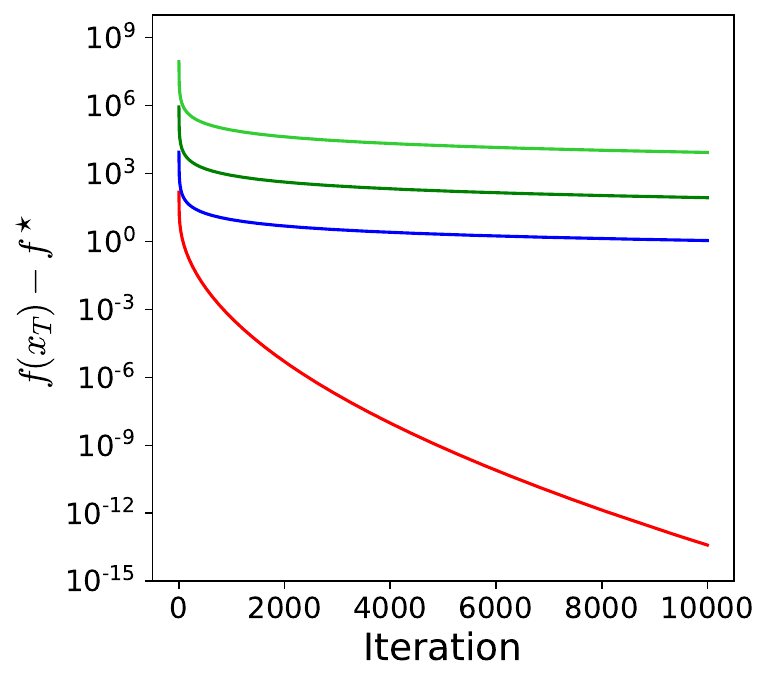}
    \vskip - 0.1 in
    \caption{DecSPS \citep{orvieto2022dynamics}}
\end{subfigure}
\hfill
\begin{subfigure}[b]{0.325\textwidth}
    \centering
    \includegraphics[width=\textwidth]{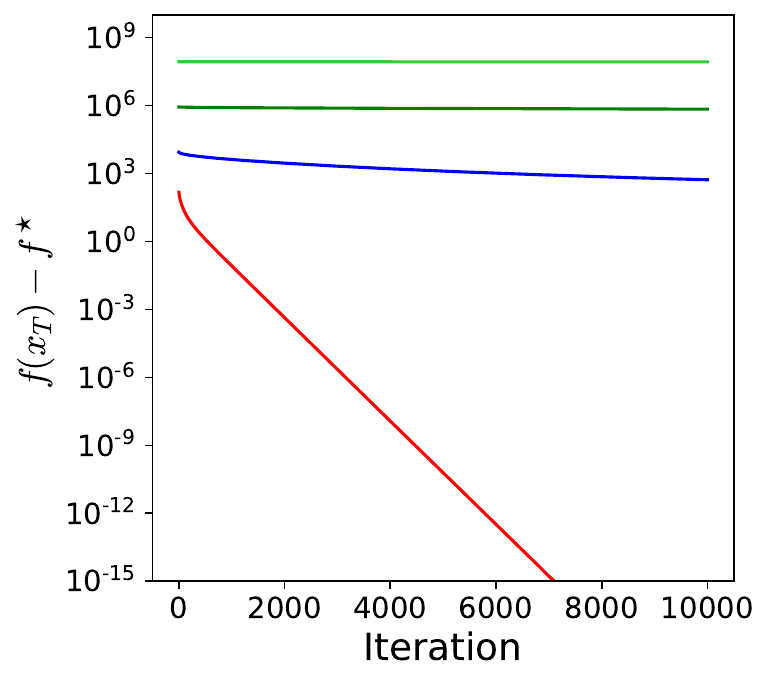}
    \vskip - 0.1 in
    \caption{AdaSPS \citep{jiang2023adaptive}}
\end{subfigure}
\hfill
\begin{subfigure}[b]{0.325\textwidth}
    \centering
    \includegraphics[width=\textwidth]{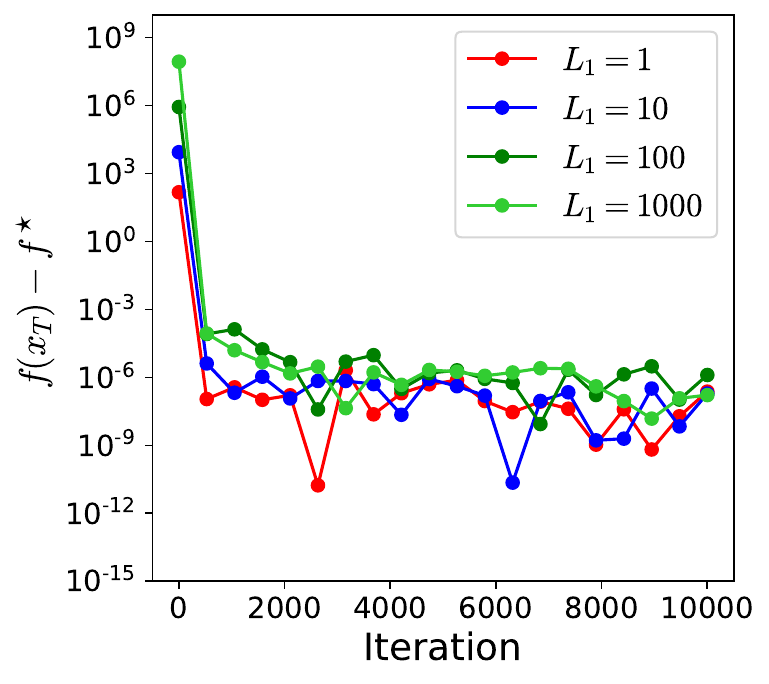}
    \vskip - 0.1 in
    \caption{Inexact Polyak Stepsize}
\end{subfigure}
\caption{Convergence behaviors of various methods with the synthetic function.}
\label{fig:synthetic_with_exact_minimum}      
\end{figure}

\subsection{Synthetic function}
\label{sec:synthetic}
\paragraph{Setting:}
In this section, we validate our theory for Polyak stepsize and Inexact Polyak Stepsize using a synthetic function.
We set the loss function as $f(x) = \tfrac{L_0 L_1^2}{72} x^4 + \tfrac{L_0}{4} x^2 + f^\star$,
which is $(L_0, L_1)$-smooth for any $L_0 > 0$ and $L_1 > 0$ (See Proposition \ref{prop:generalized_smooth} in Appendix).
We set $L_0$ to $1$, $\vx_0$ to $5$, $f^\star = 1$, and $l^\star = 0$ and then evaluated various methods when varying $L_1$.

\paragraph{Results:}
We show the results in Fig.~\ref{fig:synthetic_with_exact_minimum}.
The results indicate that gradient descent converges slowly when $L_1$ is large, whereas Polyak stepsize and clipped gradient descent does not depend on $L_1$.
These observations are consistent with those discussed in Sec.~\ref{sec:improved_convergence_result}, 
which shows that the convergence rate of Polyak stepsize is asymptotically independent of $L$ as in clipped gradient descent.
By comparing DecSPS, AdaSPS, and Inexact Polyak Stepsize, which are parameter-free methods,
the convergence rates of DecSPS and AdaSPS degrade as $L_1$ increases.
Thus, DecSPS and AdaSPS lose the favorable property of asymptotic independence of $L$ under $(L_0, L_1)$-smoothness.
In contrast, the convergence behavior of Inexact Polyak Stepsize does not depend on $L_1$,
which is consistent with Theorem \ref{theorem:main_lower_bound},
and Inexact Polyak Stepsize successfully inherits the Polyak stepsize under $(L_0, L_1)$-smoothness.
\begin{figure}[t!]
\centering
\hfill
\centering
\vskip - 0.1 in
\begin{subfigure}[b]{0.265\textwidth}
\includegraphics[width=\textwidth]{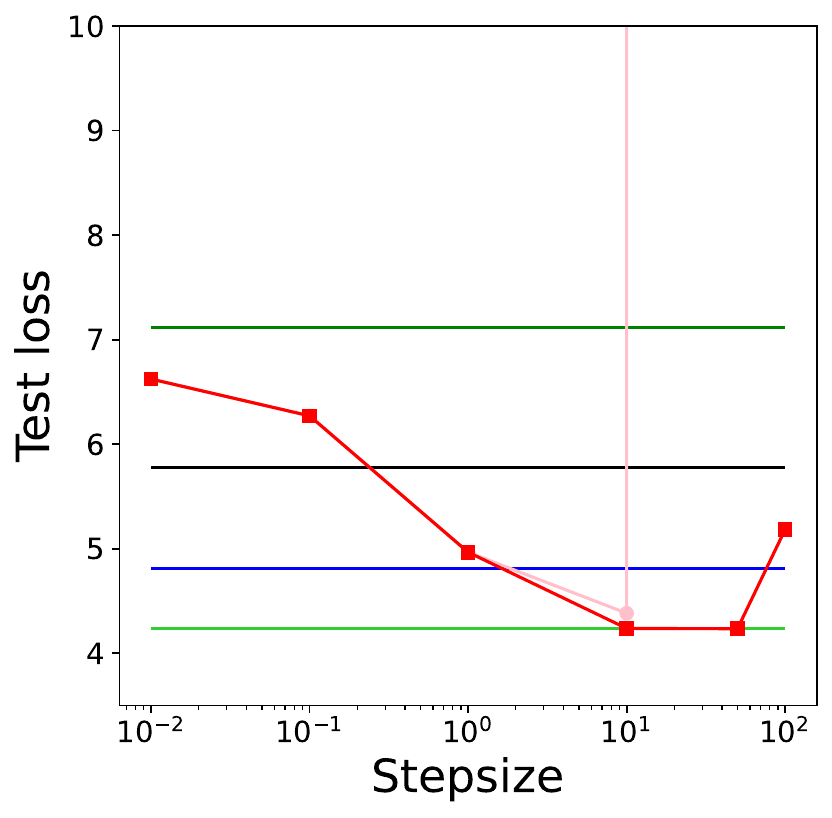}
\caption{LSTM}
\end{subfigure}
\begin{subfigure}[b]{0.265\textwidth}
\includegraphics[width=\textwidth]{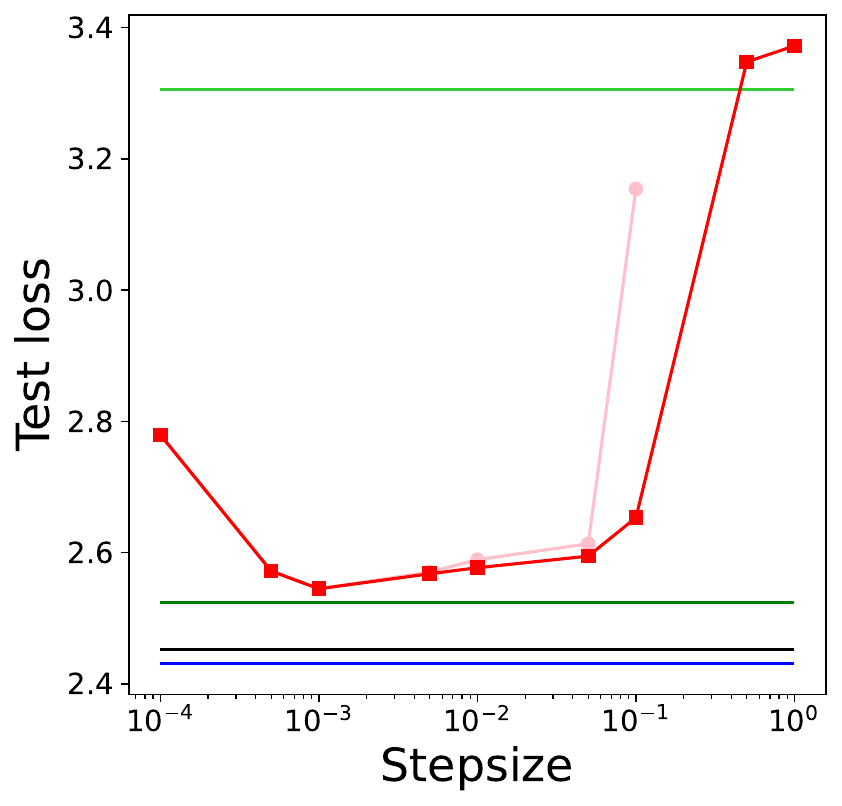}
\caption{Nano-GPT}
\end{subfigure}
\begin{subfigure}[b]{0.455\textwidth}
\includegraphics[width=\textwidth]{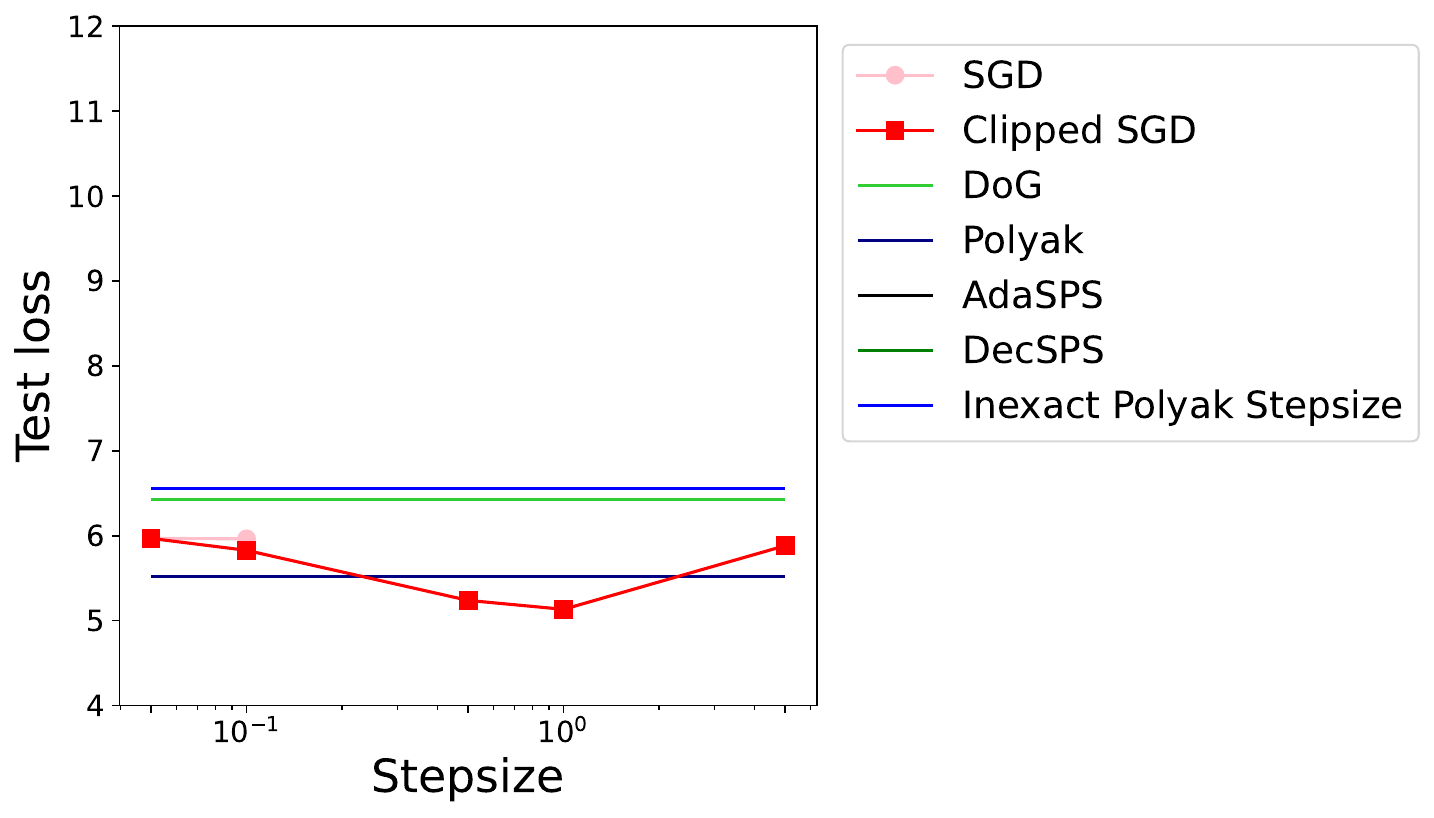}
\caption{T5}
\end{subfigure}
\hfill
\vskip - 0.2 in
\caption{The final test loss with various hyperparameter settings. For T5, the results of DecSPS and AdaSPS were omitted because their final test loss was much larger than the others, as shown in Fig.~\ref{fig:lstm}. Furthermore, the results of SGD were also omitted when the final test loss became nan or infinity.}
\label{fig:lr} 
\vskip - 0.2 in
\end{figure}
\subsection{Neural networks}
\label{section:neural_networks}
\paragraph{Setting:}
Next, we evaluated Inexact Polyak Stepsize using LSTM, Nano-GPT\footnote{\url{https://github.com/karpathy/nanoGPT}}, and T5 \citep{nawrot2023nanoT5}.
For LSTM, Nano-GPT, and T5, we used the Penn Treebank, Shakespeare, and C4 as training datasets, respectively.
For SGD and Clipped SGD, we tuned the stepsize and gradient clipping threshold on validation datasets. 
For Polyak stepsize, we showed the results when we set $f^\star$ to zero.
For Inexact Polyak Stepsize, Theorem \ref{theorem:main} requires the selection of the best parameters.
However, we do not need to choose this for neural networks because the parameters only reach the stationary point and do not reach the global minima.
See Sec.~\ref{sec:hyperparameter} for the detailed training configuration.
For all experiments, we repeated with three different seed values and reported the average.

\paragraph{Results:}
Figure \ref{fig:lstm} shows the loss curves, and Fig.~\ref{fig:lr} shows the final test losses for various hyperparameters.
The results indicate that Inexact Polyak Stepsize consistently outperform DecSPS and AdaSPS for all neural network architectures.
Although DoG performed the best for LSTM among the parameter-free methods, the training behavior of DoG was very unstable for Nano-GPT, and the loss values were much higher than those of the other methods. 
Similar to DoG, Polyak stepsize outperformed all parameter-free methods for T5, but the loss values of Polyak stepsize diverged for LSTM and Nano-GPT. 
Thus, Inexact Polyak Stepsize can consistently succeed in training models for all neural network architectures.

\begin{figure}[t!]
\centering
\hfill
\centering
\vskip - 0.2 in
\begin{subfigure}[b]{\textwidth}
    \includegraphics[width=0.37\textwidth]{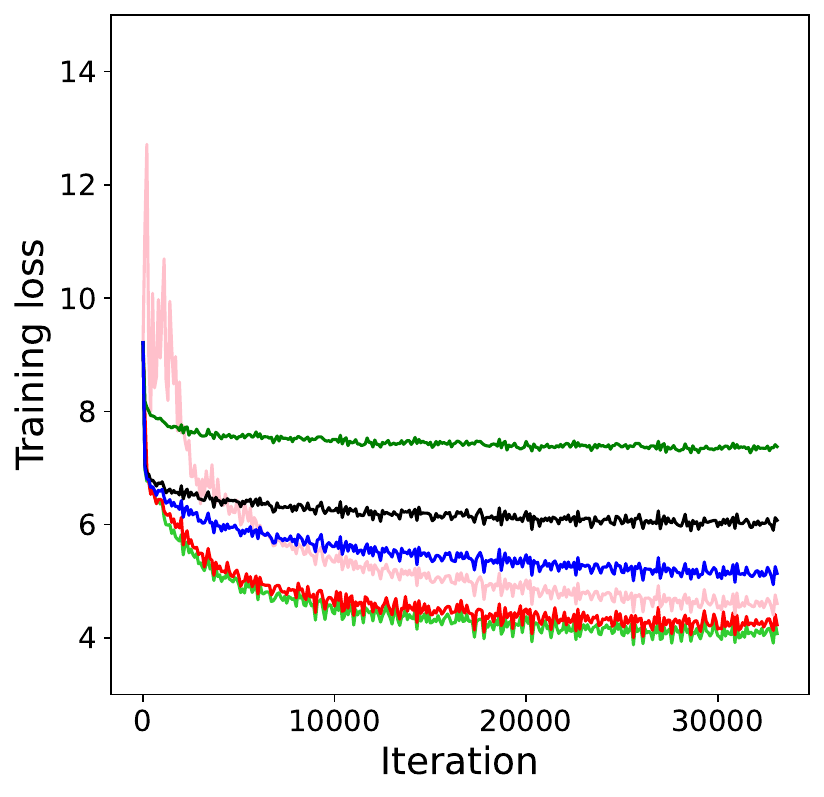}
    \hfill
    \includegraphics[width=0.61\textwidth]{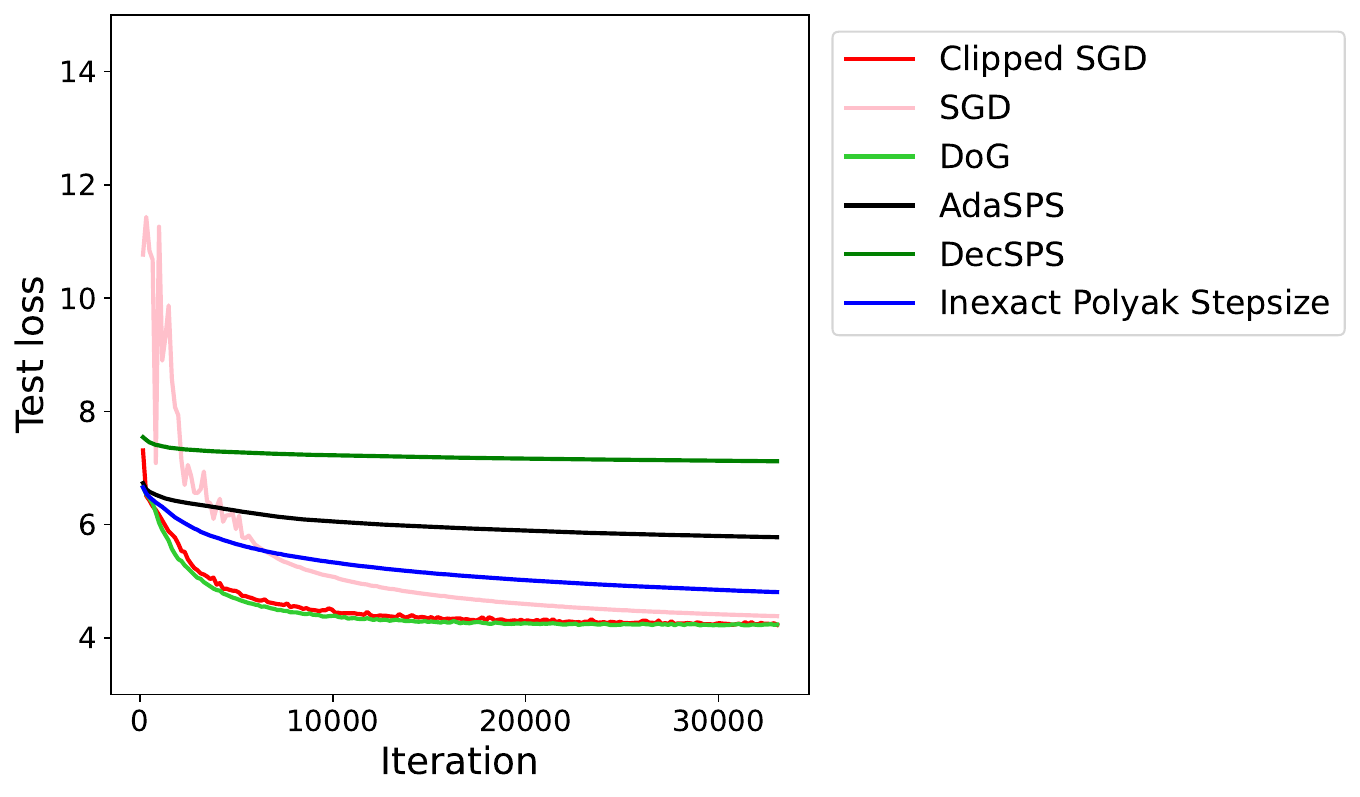}
    \vskip - 0.15 in
\caption{LSTM}
\vskip - 0.2 in
\end{subfigure}
\begin{subfigure}[b]{\textwidth}
    \includegraphics[width=0.37\textwidth]{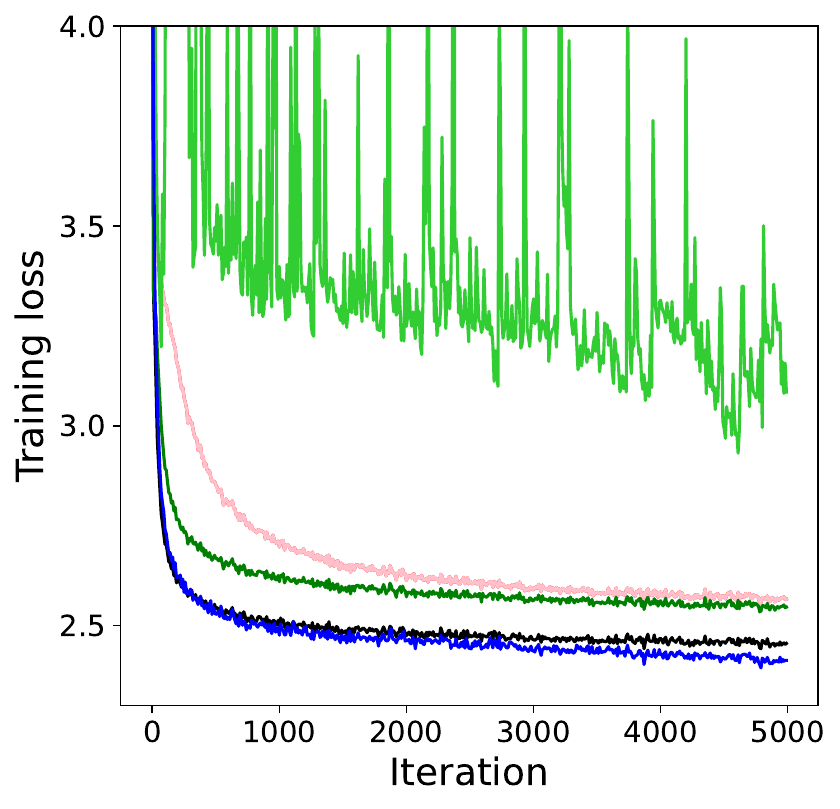}
    \hfill
    \includegraphics[width=0.61\textwidth]{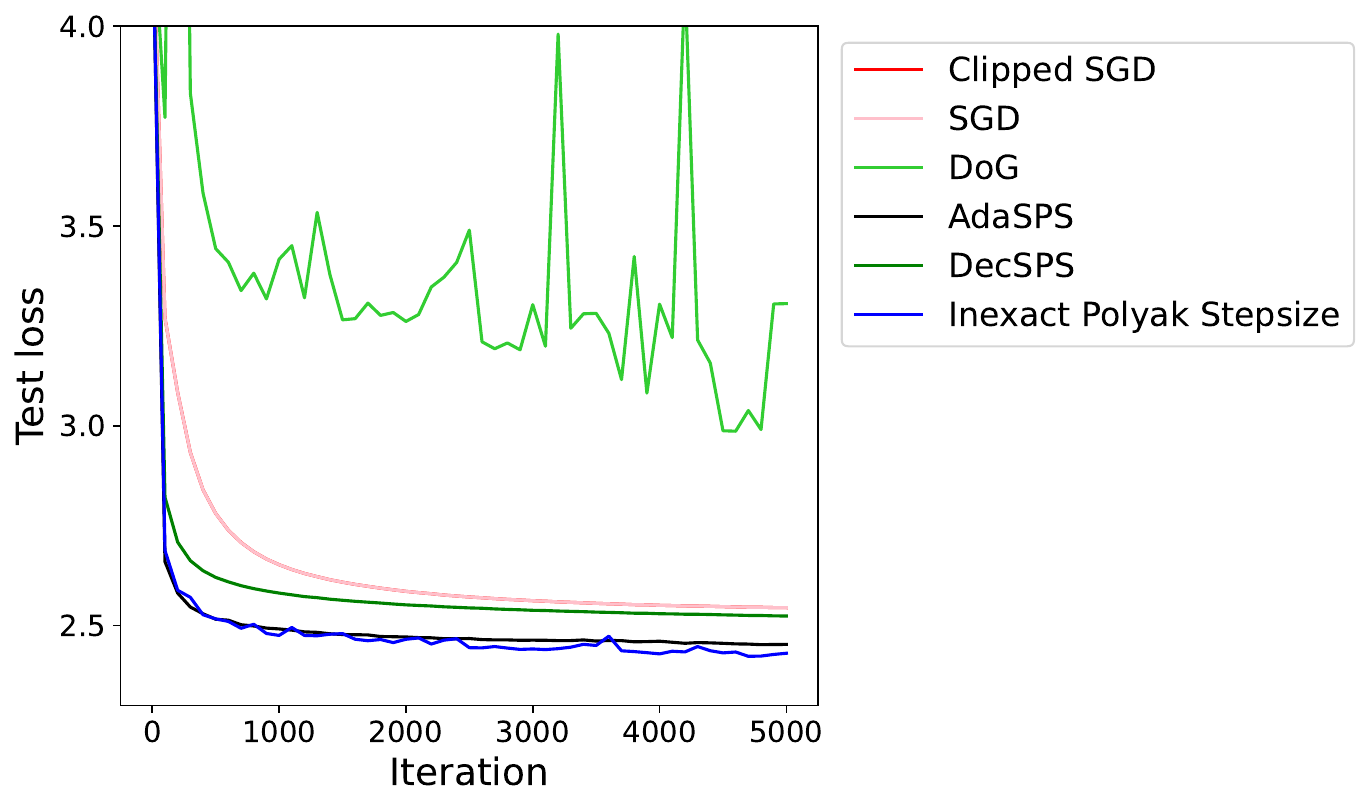}
    \vskip - 0.1 in
\caption{Nano-GPT}
\vskip - 0.2 in
\end{subfigure}
\begin{subfigure}[b]{\textwidth}
    \includegraphics[width=0.37\textwidth]{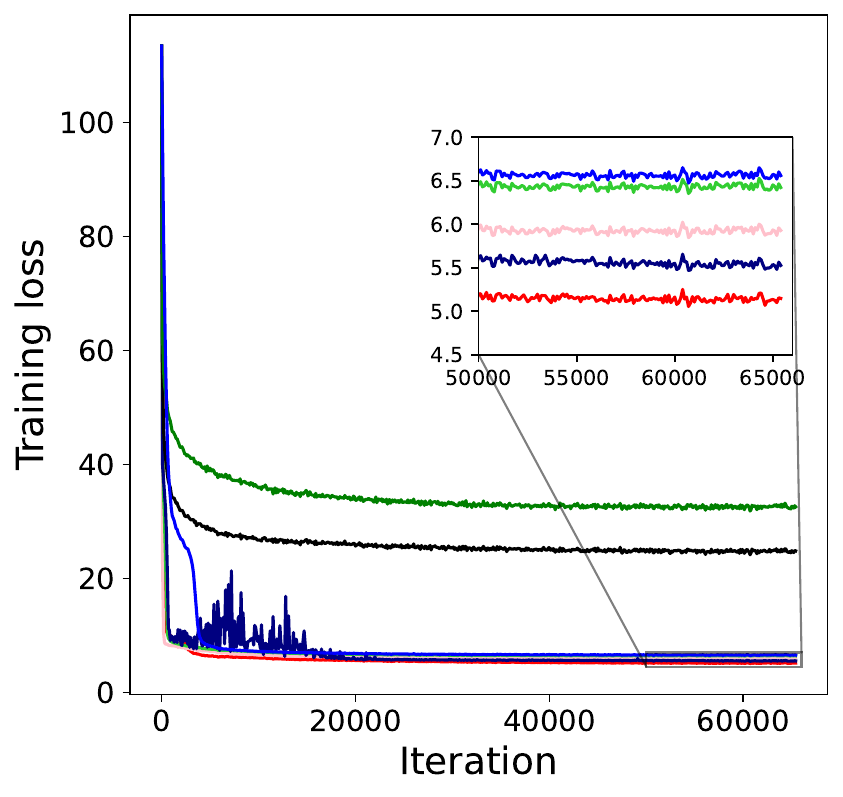}
    \hfill
    \includegraphics[width=0.61\textwidth]{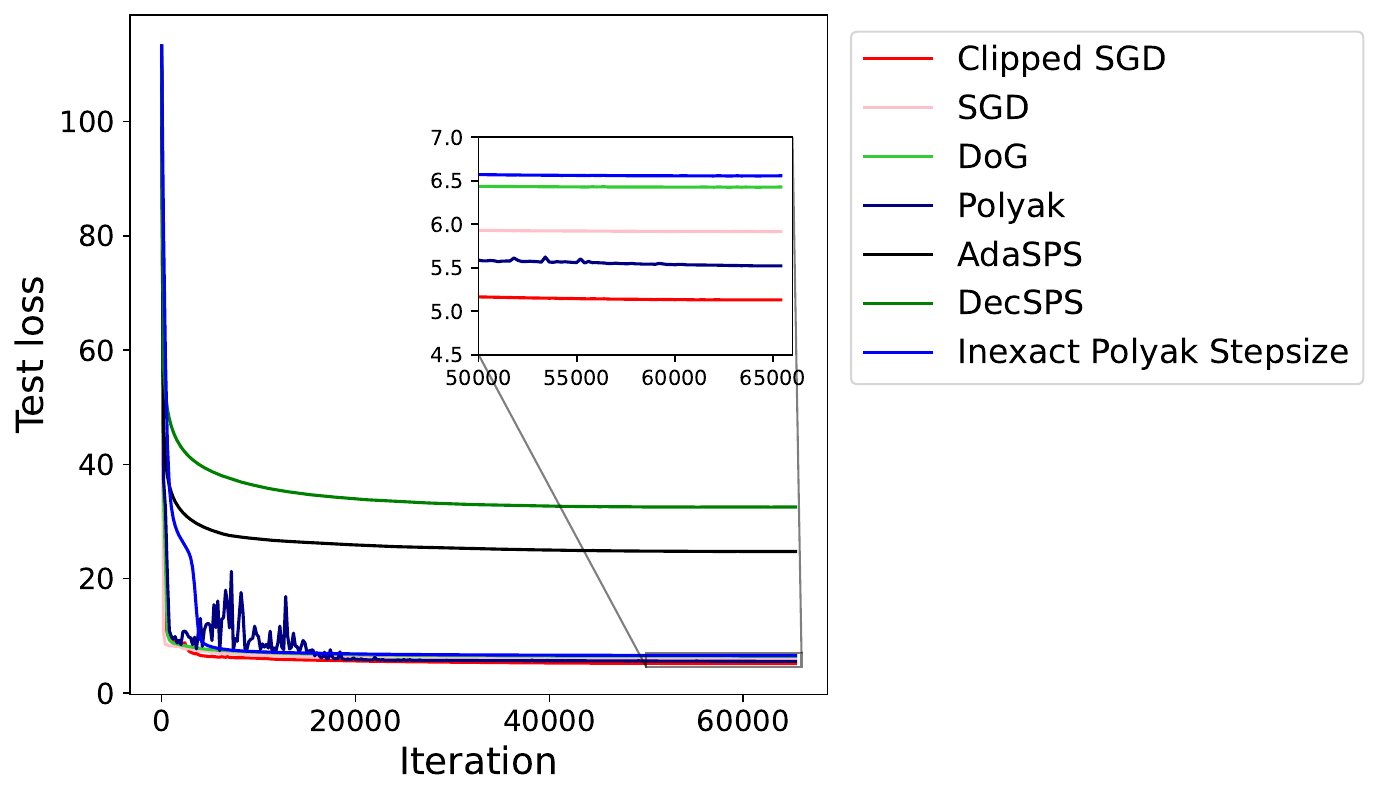}
    \vskip - 0.15 in
\caption{T5}
\end{subfigure}
\vskip - 0.01 in
\caption{Loss curves for LSTM, Nano-GPT, and T5. We plotted the training loss per $100$, $10$, and $10$ iterations for LSTM, Nano-GPT, and T5, respectively. We plotted the test loss per one epoch, $100$ iterations, and $200$ iterations, respectively. For LSTM and Nano-GPT, we found that Polyak stepsize does not converge, and its loss was much larger than that of other comparison methods. Thus, to make the figure easier to read, we omit the results of Polyak stepsize and provide the complete results, including Polyak stepsize in Sec.~\ref{sec:additional_experiments}.}
\label{fig:lstm}      
\vskip - 0.1 in
\end{figure}

\section{Conclusion}
In this study, we proposed Inexact Polyak Stepsize, which converges to the optimal solution without hyperparameter tuning at the convergence rate that is asymptotically independent of $L$ under $(L_0, L_1)$-smoothness.
Specifically, we first provided the novel convergence rate of Polyak stepsize under $(L_0, L_1)$-smoothness, revealing that Polyak stepsize can achieve exactly the same convergence rate as clipped gradient descent.
Although Polyak stepsize can improve the convergence under $(L_0, L_1)$-smoothness, Polyak stepsize requires the minimum loss value, which is a problem-specific parameter.
Then, we proposed Inexact Polyak Stepsize, which removes the problem-specific parameter from Polyak stepsize without losing the property of asymptotic independence of $L$ under $(L_0, L_1)$-smoothness.
We numerically validated our convergence results and demonstrated the effectiveness of Inexact Polyak Stepsize.

\section*{Acknowledgement}
Y.T.~was supported by KAKENHI Grant Number 23KJ1336. H.B.~and M.Y.~were supported by MEXT KAKENHI Grant Number 24K03004.
We thank Satoki Ishikawa for his helpful comments on our experiments.

\bibliography{ref}

\newpage
\newpage
\appendix

\section{Proof of Theorem \ref{theorem:main}}
\label{sec:proof_of_main}
\begin{lemma}
\label{lemma:smooth}
If Assumption \ref{assumption:smooth} holds, the following holds for any $\vx \in \mathbb{R}^d$:
\begin{align}
\label{eq:smooth2}
    \frac{1}{2 L} \| \nabla f (\vx) \|^2 \leq f (\vx) - f (\vx^\star).
\end{align}
\begin{proof}
See Lemma 2.28 in \citep{garrigos2023handbook}.
\end{proof}
\end{lemma}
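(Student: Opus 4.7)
The plan is to derive the inequality from the standard descent lemma that $L$-smoothness affords. Concretely, I would first establish that for every $\vx, \vy \in \mathbb{R}^d$,
\begin{equation*}
    f(\vy) \leq f(\vx) + \langle \nabla f(\vx), \vy - \vx \rangle + \frac{L}{2}\|\vy - \vx\|^2.
\end{equation*}
This quadratic upper bound follows from writing $f(\vy) - f(\vx) = \int_0^1 \langle \nabla f(\vx + s(\vy-\vx)), \vy - \vx \rangle \, ds$, adding and subtracting $\langle \nabla f(\vx), \vy-\vx\rangle$, and applying Cauchy–Schwarz together with the Lipschitz bound from Assumption~\ref{assumption:smooth}.

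Next, I would minimize the right-hand side of the descent lemma in $\vy$. The minimizer is $\vy = \vx - \tfrac{1}{L}\nabla f(\vx)$, and substituting gives
\begin{equation*}
    f\!\left(\vx - \tfrac{1}{L}\nabla f(\vx)\right) \leq f(\vx) - \frac{1}{2L}\|\nabla f(\vx)\|^2.
\end{equation*}
Since $\vx^\star$ is a global minimizer, the left-hand side is at least $f(\vx^\star)$, and rearranging yields the claimed bound $\tfrac{1}{2L}\|\nabla f(\vx)\|^2 \leq f(\vx) - f(\vx^\star)$.

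There is essentially no obstacle here: the result is a textbook consequence of $L$-smoothness (it does not even require convexity, only the existence of a global minimizer). The only step that needs care is justifying the descent lemma itself; everything after that is algebraic rearrangement. Since the paper cites Garrigos's handbook (Lemma 2.28), I would simply invoke that reference rather than reproducing the integration-by-parts computation in full.
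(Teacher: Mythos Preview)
Your proposal is correct and matches the paper: the paper's proof is simply a citation to Lemma~2.28 in \citep{garrigos2023handbook}, and you arrive at the same conclusion (invoke the reference) after sketching the standard descent-lemma argument behind it.
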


\begin{lemma}
\label{lemma:generalized_smooth}
If Assumption \ref{assumption:generalized_smooth} holds, the following holds for any $\vx \in \mathbb{R}^d$:
\begin{align}
\label{eq:generalized_smooth2}
    \frac{1}{2 (L_0 + L_1 \| \nabla f (\vx) \|)} \| \nabla f (\vx) \|^2 \leq f (\vx) - f (\vx^\star).
\end{align}
\begin{proof}
See Lemma A.2 in \citep{koloskova2023revisiting}.
\end{proof}
\end{lemma}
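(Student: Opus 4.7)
The plan is to mimic the classical $L$-smooth proof of the inequality $\tfrac{1}{2L}\|\nabla f(\vx)\|^2 \leq f(\vx) - f^\star$, but with the global constant $L$ replaced by the pointwise constant $L_\vx := L_0 + L_1\|\nabla f(\vx)\|$ that appears on the right-hand side of Assumption \ref{assumption:generalized_smooth}. The target inequality follows immediately once I establish a single descent bound $f(\vy) \leq f(\vx) - \tfrac{1}{2L_\vx}\|\nabla f(\vx)\|^2$ for some explicit point $\vy$, since then $f(\vx^\star) \leq f(\vy)$ by global optimality of $\vx^\star$.

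The canonical choice is $\vy := \vx - \tfrac{1}{L_\vx}\nabla f(\vx)$, which is the minimizer of the quadratic upper model $q(\vz) := f(\vx) + \langle \nabla f(\vx), \vz - \vx\rangle + \tfrac{L_\vx}{2}\|\vz - \vx\|^2$. To justify $f(\vy) \leq q(\vy)$, I would derive the local descent lemma
\begin{equation*}
f(\vz) \leq f(\vx) + \langle \nabla f(\vx), \vz - \vx\rangle + \tfrac{L_\vx}{2}\|\vz - \vx\|^2 \quad \text{whenever } \|\vz - \vx\| \leq \tfrac{1}{L_1},
\end{equation*}
by writing
\begin{equation*}
f(\vz) - f(\vx) - \langle \nabla f(\vx), \vz - \vx\rangle = \int_0^1 \langle \nabla f(\vx + t(\vz - \vx)) - \nabla f(\vx),\, \vz - \vx\rangle \, dt,
\end{equation*}
applying Cauchy--Schwarz, and bounding the integrand via Assumption \ref{assumption:generalized_smooth}. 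Each intermediate point $\vx + t(\vz - \vx)$ lies within distance $t\|\vz - \vx\| \leq 1/L_1$ of $\vx$, so the $(L_0,L_1)$-smoothness estimate applies and yields an integrand of at most $L_\vx \cdot t\|\vz - \vx\|^2$; the $t$-integral then produces the factor $L_\vx/2$. Substituting $\vz = \vy$ with $\|\vy - \vx\| = \|\nabla f(\vx)\|/L_\vx$ gives the desired one-step descent.

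The one subtlety I expect is the locality restriction $\|\vx - \vz\| \leq 1/L_1$ in Assumption \ref{assumption:generalized_smooth}: the descent lemma above is only valid for short steps, and I must check that the specific candidate $\vy$ falls in this regime. I would settle this by the direct calculation
\begin{equation*}
\|\vy - \vx\| \;=\; \frac{\|\nabla f(\vx)\|}{L_0 + L_1\|\nabla f(\vx)\|} \;\leq\; \frac{\|\nabla f(\vx)\|}{L_1\|\nabla f(\vx)\|} \;=\; \frac{1}{L_1},
\end{equation*}
which holds unconditionally (and trivially reduces to the $L$-smooth case when $L_1 = 0$). This is exactly the reason the pointwise smoothness constant $L_\vx$ is self-consistent with the validity radius $1/L_1$, and it is what makes the analogue of the classical descent lemma go through without any convexity hypothesis.
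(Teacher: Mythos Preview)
Your proposal is correct and is precisely the standard argument behind the cited Lemma A.2 of \citet{koloskova2023revisiting}: derive the local quadratic upper bound via the integral remainder and Assumption~\ref{assumption:generalized_smooth}, then evaluate it at the step $\vy=\vx-\tfrac{1}{L_\vx}\nabla f(\vx)$, whose length $\|\nabla f(\vx)\|/(L_0+L_1\|\nabla f(\vx)\|)\le 1/L_1$ indeed respects the validity radius. The paper itself defers to that reference, so your write-up simply fills in exactly that proof.
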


\begin{lemma}
Assume that $f$ is convex and Assumption \ref{assumption:smooth} and \ref{assumption:generalized_smooth} hold. Let $T$ be the number of iterations and define $\tau \coloneqq \argmin_{0\leq t \leq T-1} f(\vx_t)$. Then, gradient descent with Polyak stepsize Eq.~\eqref{eq:polyak} satisfies:
\begin{align*}
    f(\vx_\tau) - f(\vx^\star) 
    &\leq \frac{8 L_0 \| \vx_0 - \vx^\star \|^2}{T} + \frac{64 L L_1^2 \| \vx_0 - \vx^\star \|^4}{T^2}.
\end{align*}
\end{lemma}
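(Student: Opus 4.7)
The plan is to combine the standard one-step contraction of the Polyak stepsize with the new lower bound of Proposition~\ref{proposition:connection}, then split the iterations by the size of the gradient and use $L$-smoothness only on the ``large-gradient'' regime. This yields two candidate upper bounds for the best iterate, and a short case analysis converts them into the stated two-term rate.

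\paragraph{Step 1: descent identity.}
First I would expand $\|\vx_{t+1}-\vx^\star\|^2$ using the update rule and convexity $\langle \nabla f(\vx_t),\vx_t-\vx^\star\rangle \geq f(\vx_t)-f^\star$, and substitute $\eta_t=(f(\vx_t)-f^\star)/\|\nabla f(\vx_t)\|^2$. This produces the well-known Polyak identity
\begin{equation*}
\|\vx_{t+1}-\vx^\star\|^2 \;\leq\; \|\vx_t-\vx^\star\|^2 \;-\; \frac{(f(\vx_t)-f^\star)^2}{\|\nabla f(\vx_t)\|^2}.
\end{equation*}
Telescoping from $t=0$ to $T-1$ gives $\sum_{t=0}^{T-1}(f(\vx_t)-f^\star)^2/\|\nabla f(\vx_t)\|^2 \leq \|\vx_0-\vx^\star\|^2 =:D^2$.

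\paragraph{Step 2: rewriting the summand via Proposition~\ref{proposition:connection}.}
Next I would write $(f(\vx_t)-f^\star)^2/\|\nabla f(\vx_t)\|^2 = (f(\vx_t)-f^\star)\cdot\eta_t$ and apply Proposition~\ref{proposition:connection} to lower-bound $\eta_t$, giving
\begin{equation*}
\frac{(f(\vx_t)-f^\star)^2}{\|\nabla f(\vx_t)\|^2} \;\geq\; (f(\vx_t)-f^\star)\,\min\!\left\{\tfrac{1}{4L_0},\;\tfrac{1}{4L_1\|\nabla f(\vx_t)\|}\right\}.
\end{equation*}
Let $S_1=\{t:\|\nabla f(\vx_t)\|\leq L_0/L_1\}$ and $S_2=\{t:\|\nabla f(\vx_t)\|>L_0/L_1\}$, with $T_1=|S_1|$ and $T_2=|S_2|$; for $t\in S_1$ the summand is at least $(f(\vx_t)-f^\star)/(4L_0)$, and for $t\in S_2$ I need to handle the $\|\nabla f(\vx_t)\|$ in the denominator.

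\paragraph{Step 3: handling the large-gradient regime via $L$-smoothness.}
This is the main obstacle, because the $S_2$ summand is not yet linear in $f(\vx_t)-f^\star$. I would invoke Lemma~\ref{lemma:smooth}, which gives $\|\nabla f(\vx_t)\|\leq\sqrt{2L(f(\vx_t)-f^\star)}$. Plugging this in,
\begin{equation*}
(f(\vx_t)-f^\star)\cdot\frac{1}{4L_1\|\nabla f(\vx_t)\|} \;\geq\; \frac{\sqrt{f(\vx_t)-f^\star}}{4L_1\sqrt{2L}}.
\end{equation*}
Combining Steps 1--3 with the definition of $\tau$ (so $f(\vx_\tau)\leq f(\vx_t)$ for all $t$), and keeping only the minimum-iterate value on the left,
\begin{equation*}
\frac{T_1\,(f(\vx_\tau)-f^\star)}{4L_0} \;+\; \frac{T_2\,\sqrt{f(\vx_\tau)-f^\star}}{4L_1\sqrt{2L}} \;\leq\; D^2.
\end{equation*}

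\paragraph{Step 4: case analysis.}
Since $T_1+T_2=T$, at least one of $T_1,T_2$ is $\geq T/2$. If $T_1\geq T/2$, dropping the second term yields $f(\vx_\tau)-f^\star \leq 8L_0D^2/T$. If $T_2\geq T/2$, dropping the first term yields $\sqrt{f(\vx_\tau)-f^\star}\leq 8L_1\sqrt{2L}\,D^2/T$, i.e., $f(\vx_\tau)-f^\star \leq 128\,LL_1^2 D^4/T^2$ (up to the constant the statement gives). Since the bound must hold in either case, taking the sum of the two candidate bounds gives the claimed
\begin{equation*}
f(\vx_\tau)-f(\vx^\star) \;\leq\; \frac{8 L_0\,\|\vx_0-\vx^\star\|^2}{T} \;+\; \frac{64\,LL_1^2\,\|\vx_0-\vx^\star\|^4}{T^2},
\end{equation*}
modulo the small numerical constants, which can be sharpened by using a slightly tighter form of Proposition~\ref{proposition:connection} or of the $L$-smoothness inequality. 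The whole argument is essentially three ingredients glued together: the Polyak descent identity, the $(L_0,L_1)$-based stepsize lower bound, and $L$-smoothness only where the gradient is large; the hard part is exactly the $\sqrt{\,\cdot\,}$ bookkeeping in Step~3, which is what produces the $T^{-2}$ second term.
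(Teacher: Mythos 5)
Your argument is correct and is built from the same three ingredients as the paper's proof: the Polyak descent identity, the split of iterations at $\|\nabla f(\vx_t)\| = L_0/L_1$, and the use of Lemma~\ref{lemma:generalized_smooth} on the small-gradient set versus Lemma~\ref{lemma:smooth} on the large-gradient set (your per-iteration lower bounds on the summand coincide exactly with the paper's). Where you genuinely diverge is the endgame. The paper keeps the mixed inequality $\sum_{t\in\mathcal{T}_1}(f(\vx_t)-f^\star) + \tfrac{L_0}{L_1}\sum_{t\in\mathcal{T}_2}\sqrt{(f(\vx_t)-f^\star)/(2L)} \le 4L_0\|\vx_0-\vx^\star\|^2$ and homogenizes it: it applies $a^2 \ge 2ab-b^2$ with an optimized $b=\sqrt{4L_0\|\vx_0-\vx^\star\|^2/T}$ to convert the $\mathcal{T}_1$-sum into a sum of square roots, bounds the average of $\sqrt{f(\vx_t)-f^\star}$ over \emph{all} $T$ iterations, and then squares. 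You instead plug in $f(\vx_\tau)\le f(\vx_t)$ immediately and run a pigeonhole argument on whether $|S_1|$ or $|S_2|$ is at least $T/2$. Your route is shorter and more elementary; its only cost is the constant $128$ rather than $64$ in the second term (the factor $2$ comes from $T/2$ versus the paper's $(a+b)^2\le 2a^2+2b^2$), which you correctly flag as immaterial to the stated $\mathcal{O}(\cdot)$ rate, though it means your argument proves the lemma only up to that constant rather than verbatim. One small point worth making explicit in a write-up: in your combined display both terms are nonnegative, so each is individually bounded by $D^2$; that is what licenses ``dropping'' the other term in each case.
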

\begin{proof}
We have
\begin{align*}
    \| \vx_{t+1} - \vx^\star \|^2 
    &= \| \vx_{t} - \vx^\star \|^2 - 2 \eta_t \langle \nabla f (\vx_t), \vx_t - \vx^\star \rangle + \eta_t^2 \| \nabla f (\vx) \|^2 \\
    &\leq \| \vx_{t} - \vx^\star \|^2 - 2 \eta_t ( f (\vx_t) - f (\vx^\star)) + \eta_t^2 \| \nabla f (\vx) \|^2,
\end{align*}
where we use the convexity of $f$ in the inequality.

\textbf{Case when $\| \nabla f (\vx_t) \| \leq \tfrac{L_0}{L_1}$:}
Substituting the stepsize, we get
\begin{align*}
    \| \vx_{t+1} - \vx^\star \|^2 
    &\leq \| \vx_{t} - \vx^\star \|^2 - \frac{f (\vx_t) - f (\vx^\star)}{\| \nabla f (\vx_t) \|^2} ( f (\vx_t) - f (\vx^\star)) \\
    &\leq \| \vx_{t} - \vx^\star \|^2 - \frac{1}{2 (L_0 + L_1 \| \nabla f (\vx_t) \|)} ( f (\vx_t) - f (\vx^\star)) \\
    &\leq \| \vx_{t} - \vx^\star \|^2 - \frac{1}{4 L_0} ( f (\vx_t) - f (\vx^\star)),
\end{align*}
where we use Lemma \ref{lemma:generalized_smooth} in the second inequality.
Unrolling the above inequality, we obtain
\begin{align*}
    f (\vx_t) - f (\vx^\star)
    &\leq 4 L_0 \left( \| \vx_{t} - \vx^\star \|^2 - \| \vx_{t+1} - \vx^\star \|^2 \right).
\end{align*}

\textbf{Case when $\| \nabla f (\vx_t) \| > \tfrac{L_0}{L_1}$:}
Substituting the stepsize, we get
\begin{align*}
    \| \vx_{t+1} - \vx^\star \|^2 
    &\leq \| \vx_{t} - \vx^\star \|^2 - \frac{(f(\vx_t) - f(\vx^\star))^2}{\| \nabla f(\vx_t)\|^2} \\
    &\leq \| \vx_{t} - \vx^\star \|^2 - \sqrt{\frac{f(\vx_t) - f(\vx^\star)}{2L}} \frac{f(\vx_t) - f(\vx^\star)}{\| \nabla f(\vx_t)\|},
\end{align*}
where we use Lemmas \ref{lemma:smooth} in the last inequality.
Then, $\| \nabla f (\vx_t) \| > \tfrac{L_0}{L_1}$ implies
\begin{align*}
    \frac{L_0 + L_1 \| \nabla f (\vx_t)\|}{2 L_1 \| \nabla f (\vx_t) \|} < 1.
\end{align*}
Thus, we get
\begin{align*}
    \| \vx_{t+1} - \vx^\star \|^2 
    &\leq \| \vx_{t} - \vx^\star \|^2 - \sqrt{\frac{f(\vx_t) - f(\vx^\star)}{2L}} \frac{f(\vx_t) - f(\vx^\star)}{\| \nabla f(\vx_t)\|^2} \frac{L_0 + L_1 \| \nabla f (\vx_t)\|}{2 L_1} \\
    &\leq \| \vx_{t} - \vx^\star \|^2 - \frac{1}{4 L_1} \sqrt{\frac{f(\vx_t) - f(\vx^\star)}{2L}},
\end{align*}
where we use Lemma \ref{lemma:generalized_smooth} in the last inequality.
Unrolling the above inequality and multiplying $L_0$ on both sides, we get
\begin{align*}
    \frac{L_0}{L_1} \sqrt{ \frac{f (\vx) - f (\vx^\star)}{2 L}} 
    &\leq 4 L_0 \left( \| \vx_{t} - \vx^\star \|^2 - \| \vx_{t+1} - \vx^\star \|^2 \right).
\end{align*}

\textbf{Summing the two cases:}
Define $\mathcal{T}_1$ and $\mathcal{T}_2$ as follows:
\begin{align*}
    \mathcal{T}_1 \coloneqq \left\{ t \middle| \| \nabla f (\vx_t) \| \leq \frac{L_0}{L_1} \right\}, \;\; \mathcal{T}_2 \coloneqq \left\{ t \middle| \| \nabla f (\vx_t) \| > \frac{L_0}{L_1} \right\}.
\end{align*}
We obtain
\begin{align*}
    \sum_{t \in \mathcal{T}_1} \left( f(\vx_t) - f(\vx^\star) \right) + \frac{L_0}{L_1} \sum_{t \in \mathcal{T}_2} \sqrt{ \frac{f (\vx_t) - f (\vx^\star)}{2 L}} 
    &\leq 4 L_0 \| \vx_0 - \vx^\star \|^2.
\end{align*}
Then, the above inequality implies
\begin{align*}
    \frac{1}{T} \sum_{t \in \mathcal{T}_1} f(\vx_t) - f(\vx^\star) 
    &\leq \frac{4 L_0 \| \vx_0 - \vx^\star \|^2}{T}, \\
    \frac{1}{T} \sum_{t \in \mathcal{T}_2} \sqrt{ f (\vx_t) - f (\vx^\star)} 
    &\leq \frac{4 L_1 \sqrt{2L} \| \vx_0 - \vx^\star \|^2}{T}.
\end{align*}
Using $a^2 \geq 2 a b - b^2$, we obtain for any $b \in \mathbb{R}$
\begin{align*}
    \frac{1}{T} \sum_{t \in \mathcal{T}_1} \left( 2 b \sqrt{f(\vx_t) - f(\vx^\star)} - b^2 \right) 
    &\leq \frac{4 L_0 \| \vx_0 - \vx^\star \|^2}{T}.
\end{align*}
Thus, when $b > 0$, we obtain
\begin{align*}
    \frac{1}{T} \sum_{t \in \mathcal{T}_1} \sqrt{f(\vx_t) - f(\vx^\star)} 
    &\leq \frac{4 L_0 \| \vx_0 - \vx^\star \|^2}{2 b T} + \frac{b}{2}.
\end{align*}
Choosing $b = \sqrt{\tfrac{4 L_0 \| \vx_0 - \vx^\star \|^2}{T}}$, we get
\begin{align*}
    \frac{1}{T} \sum_{t \in \mathcal{T}_1} \sqrt{ f(\vx_t) - f(\vx^\star) }  
    &\leq \sqrt{ \frac{4 L_0 \| \vx_0 - \vx^\star \|^2}{T}}.
\end{align*}
Thus, we get
\begin{align*}
    \frac{1}{T} \sum_{t=0}^{T-1} \sqrt{ f(\vx_t) - f(\vx^\star) }  
    &\leq \sqrt{ \frac{4 L_0 \| \vx_0 - \vx^\star \|^2}{T}} + \frac{4 L_1 \sqrt{2L} \| \vx_0 - \vx^\star \|^2}{T}.
\end{align*}
Defining $\tau \coloneqq \argmin_t f (\vx_t)$, we get
\begin{align*}
    \sqrt{ f(\vx_\tau) - f(\vx^\star) }  
    &\leq \sqrt{ \frac{4 L_0 \| \vx_0 - \vx^\star \|^2}{T}} + \frac{4 L_1 \sqrt{2L} \| \vx_0 - \vx^\star \|^2}{T}.
\end{align*}
Squaring the both sides, and using $(a + b)^2 \leq 2 a^2 + 2 b^2$ for all $a, b \in \mathbb{R}$, we obtain 
\begin{align*}
    f(\vx_\tau) - f(\vx^\star) 
    &\leq \frac{8 L_0 \| \vx_0 - \vx^\star \|^2}{T} + \frac{64 L L_1^2 \| \vx_0 - \vx^\star \|^4}{T^2}.
\end{align*}
This concludes the statement.
\end{proof}

\newpage
\section{Proof of Theorem \ref{theorem:main_lower_bound}}
\label{sec:proof_of_lower_bound}

\begin{lemma}
\label{lemma:inexact_polyak}
Assume that $f$ is convex and Assumptions \ref{assumption:smooth} and \ref{assumption:generalized_smooth} hold. Let $T$ be the number of iterations and define $\tau \coloneqq \argmin_{0\leq t \leq T-1} f(\vx_t)$. If $f(\vx_t) - f^\star \geq \tfrac{\sigma^2}{\sqrt{T}}$ for all $t$, then gradient descent with stepsize Eq.~\eqref{eq:inexact_polyak} satisfies:
\begin{align*}
    f(\vx_\tau) - f^\star
    &\leq \frac{8 L_0 \| \vx_0 - \vx^\star \|^2 + 2 \sigma^2}{\sqrt{T}}
    + \frac{128 L_1^2 L \| \vx_0 - \vx^\star \|^4}{T} 
    + \frac{8 L_1^2 \sigma^4 L}{L_0^2 T}.
\end{align*}
where $\vx^\star \coloneqq \argmin_{\vx} f (\vx)$ and $\sigma^2 \coloneqq f^\star - l^\star$.
\end{lemma}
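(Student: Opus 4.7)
The plan is to adapt the proof of Theorem~\ref{theorem:main} for the Polyak stepsize, tracking two modifications introduced by Inexact Polyak Stepsize: the $1/\sqrt{T}$ damping factor on the stepsize, and the replacement of $f^\star$ by the looser lower bound $l^\star = f^\star - \sigma^2$, which inserts additive $\sigma^2$ corrections throughout the analysis.

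First I would start from the convex descent relation $\|\vx_{t+1}-\vx^\star\|^2 \le \|\vx_t-\vx^\star\|^2 - 2\eta_t g_t + \eta_t^2\|\nabla f(\vx_t)\|^2$, writing $g_t \coloneqq f(\vx_t) - f^\star$. The key algebraic identity enabled by this specific stepsize is $\eta_t^2\|\nabla f(\vx_t)\|^2 = \eta_t (f(\vx_t) - l^\star)/\sqrt{T} = \eta_t(g_t + \sigma^2)/\sqrt{T}$, collapsing the right-hand side to $\|\vx_t-\vx^\star\|^2 - \eta_t\bigl[(2 - 1/\sqrt{T})g_t - \sigma^2/\sqrt{T}\bigr]$. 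Invoking the hypothesis $g_t \ge \sigma^2/\sqrt{T}$ then shows the bracket is at least $(1 - 1/\sqrt{T})g_t \ge g_t/2$ (for $T$ large enough), yielding the clean Polyak-style one-step decrease $\|\vx_{t+1}-\vx^\star\|^2 \le \|\vx_t-\vx^\star\|^2 - \eta_t g_t/2$.

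Next I would mimic the proof of Theorem~\ref{theorem:main} by splitting on whether $\|\nabla f(\vx_t)\|$ is at most $L_0/L_1$. Since $\eta_t \ge g_t/(\sqrt{T}\|\nabla f(\vx_t)\|^2)$, Proposition~\ref{proposition:connection}'s lower bound gives $\eta_t g_t \ge g_t/(4\sqrt{T}L_0)$ on the small-gradient set $\mathcal{T}_1$; on the large-gradient set $\mathcal{T}_2$ I would combine with Lemma~\ref{lemma:smooth}'s bound $\|\nabla f(\vx_t)\| \le \sqrt{2Lg_t}$ to obtain $\eta_t g_t \ge \sqrt{g_t}/(4\sqrt{T}L_1\sqrt{2L})$. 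Telescoping yields $\sum_{\mathcal{T}_1} g_t = O(L_0\sqrt{T}\|\vx_0-\vx^\star\|^2)$ and $\sum_{\mathcal{T}_2}\sqrt{g_t} = O(L_1\sqrt{LT}\|\vx_0-\vx^\star\|^2)$. Converting the first to a bound on $\sum_{\mathcal{T}_1}\sqrt{g_t}$ via the elementary inequality $g_t \ge 2b\sqrt{g_t} - b^2$ with $b$ tuned to balance the two pieces, combining across cases, and applying Jensen's inequality $\sqrt{g_\tau} \le \tfrac{1}{T}\sum_t \sqrt{g_t}$ yields, after squaring and using $(a+b)^2 \le 2a^2 + 2b^2$, the dominant $O(L_0\|\vx_0-\vx^\star\|^2/\sqrt{T} + LL_1^2\|\vx_0-\vx^\star\|^4/T)$ part of the bound.

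To recover the residual $2\sigma^2/\sqrt{T}$ and $8L_1^2\sigma^4 L/(L_0^2 T)$ terms, I would track the $\sigma^2$ corrections more carefully. The first comes from retaining the $-\sigma^2/\sqrt{T}$ piece in the bracket rather than using the hypothesis to discard it entirely: carried through the telescope of $\mathcal{T}_1$, the accumulated $|\mathcal{T}_1|\sigma^2/\sqrt{T}$ contribution divided by $T$ produces the stated $\sigma^2/\sqrt{T}$ term. The $\sigma^4$ term originates from expanding $(f(\vx_t) - l^\star)^2 = g_t^2 + 2g_t\sigma^2 + \sigma^4$ inside $\eta_t^2\|\nabla f(\vx_t)\|^2$; the $\sigma^4/(T\|\nabla f(\vx_t)\|^2)$ residue is only manageable on $\mathcal{T}_2$, where $1/\|\nabla f(\vx_t)\|^2 < L_1^2/L_0^2$, summing to a $L_1^2\sigma^4 L/(L_0^2 T)$-type contribution (the extra $L$ comes from the Lemma~\ref{lemma:smooth} application in Case~2). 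The main obstacle is exactly this partition: the $\sigma$ residues must be routed along $\mathcal{T}_1$ versus $\mathcal{T}_2$ in such a way that Case~1 never sees a $\sigma^4/\|\nabla f(\vx_t)\|^2$ term (which would blow up for small gradients) while Case~2 absorbs all such residues into the $L_1^2/L_0^2$ coefficient, and the Young-type inequality chosen to peel off $\sigma^2$ and $\sigma^4$ must be aligned with this split to recover the stated constants.
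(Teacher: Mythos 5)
Your proposal follows essentially the same route as the paper's proof: the convexity-based one-step inequality with the substitution $\eta_t^2\|\nabla f(\vx_t)\|^2=\eta_t(f(\vx_t)-l^\star)/\sqrt{T}$, the hypothesis $g_t\ge\sigma^2/\sqrt{T}$ to guarantee monotone distances, the case split at $\|\nabla f(\vx_t)\|=L_0/L_1$ using Lemmas \ref{lemma:smooth} and \ref{lemma:generalized_smooth}, telescoping, the $a^2\ge 2ab-b^2$ conversion on $\mathcal{T}_1$, and min-$\le$-average followed by squaring. The only cosmetic difference is that the paper never forms the ``clean'' decrease $-\eta_t g_t/2$ (which would require $T\ge 4$) but carries the per-step $+\eta_t\sigma^2/\sqrt{T}$ error through both cases---exactly the refinement you describe in your final paragraph---and on $\mathcal{T}_2$ it converts that error into a multiple of $\sqrt{g_t}$ via $\sqrt{g_t}\ge L_0/(L_1\sqrt{2L})$ rather than by expanding $(f(\vx_t)-l^\star)^2$.
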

\begin{proof}
By the convexity of $f$, we have
\begin{align*}
    \| \vx_{t+1} - \vx^\star \|^2 
    &= \| \vx_{t} - \vx^\star \|^2 - 2 \eta_t \langle \nabla f (\vx_t), \vx_t - \vx^\star \rangle + \eta_t^2 \| \nabla f (\vx_t) \|^2 \\
    &\leq \| \vx_{t} - \vx^\star \|^2 - 2 \eta_t ( f (\vx_t) - f^\star) + \eta_t^2 \| \nabla f (\vx_t) \|^2.
\end{align*}
Substituting the stepsize Eq.~\eqref{eq:inexact_polyak}, we get
\begin{align}
    \| \vx_{t+1} - \vx^\star \|^2 
    &\leq \| \vx_{t} - \vx^\star \|^2 - 2 \eta_t ( f (\vx_t) - f^\star) + \frac{\eta_t}{\sqrt{T}} (f (\vx_t) - l^\star) \nonumber \\
    &\leq \| \vx_{t} - \vx^\star \|^2 - \eta_t (2 - \frac{1}{\sqrt{T}}) ( f (\vx_t) - f^\star) + \frac{\eta_t \sigma^2}{\sqrt{T}} \nonumber \\
    \label{eq:upper_bound_of_distance}
    &\leq \| \vx_{t} - \vx^\star \|^2 - \eta_t ( f (\vx_t) - f^\star) + \frac{\eta_t \sigma^2}{\sqrt{T}},
\end{align}
where we use $T \geq 1$ in the last inequality.
Unrolling the above inequality and dividing by $\eta_t$, we obtain
\begin{align} 
\label{eq:upper_bound_of_loss}
    f (\vx_t) - f^\star
    &\leq \frac{\| \vx_{t} - \vx^\star \|^2 - \| \vx_{t+1} - \vx^\star \|^2}{\eta_t}  + \frac{\sigma^2}{\sqrt{T}}.
\end{align}

\textbf{Case when $\| \nabla f (\vx_t) \| \leq \tfrac{L_0}{L_1}$:}
From $f(\vx_t) - f^\star \geq \tfrac{\sigma^2}{\sqrt{T}}$ and Eq.~\eqref{eq:upper_bound_of_distance}, we obtain
\begin{align}
\label{eq:distance}
    \| \vx_{t} - \vx^\star \|^2 - \| \vx_{t+1} - \vx^\star \|^2 \geq 0.
\end{align}
Thus, we get
\begin{align*} 
    f (\vx_t) - f^\star
    &\leq 2 (L_0 + L_1 \| \nabla f (\vx_t) \|) \sqrt{T} (\| \vx_{t} - \vx^\star \|^2 - \| \vx_{t+1} - \vx^\star \|^2)  + \frac{\sigma^2}{\sqrt{T}} \\
    &\leq 4 L_0 \sqrt{T} (\| \vx_{t} - \vx^\star \|^2 - \| \vx_{t+1} - \vx^\star \|^2)  + \frac{\sigma^2}{\sqrt{T}},
\end{align*}
where we use $f^\star \geq l^\star$ and Lemma \ref{lemma:generalized_smooth} for the first inequality and use $\| \nabla f (\vx_t) \| \leq \tfrac{L_0}{L_1}$ for the last inequality.

\textbf{Case when $\| \nabla f (\vx_t) \| > \tfrac{L_0}{L_1}$:}
From Lemma \ref{lemma:generalized_smooth}, we have
\begin{align*}
    \eta_t 
    \geq \frac{f (\vx_t) - f^\star}{\sqrt{T} \| \nabla f(\vx_t) \|^2} 
    \geq \frac{1}{2 (L_0 + L_1  \| \nabla f(\vx_t) \|) \sqrt{T}}.
\end{align*}
Then, we obtain
\begin{align*}
    \eta_t 
    \geq \frac{1}{4 L_1  \| \nabla f(\vx_t) \| \sqrt{T}}
    \geq \frac{1}{4 L_1 \sqrt{2 L T (f (\vx_t) - f^\star)}},
\end{align*}
where we use $\| \nabla f (\vx_t) \| > \tfrac{L_0}{L_1}$ for the first inequality,
and Lemma \ref{lemma:smooth} for the last inequality.
Combining Eqs.~\eqref{eq:upper_bound_of_loss} and \eqref{eq:distance}, we obtain
\begin{align*} 
    f (\vx_t) - f^\star
    &\leq 4 L_1 \sqrt{2 L T (f (\vx_t) - f^\star)} (\| \vx_{t} - \vx^\star \|^2 - \| \vx_{t+1} - \vx^\star \|^2)  + \frac{\sigma^2}{\sqrt{T}}.
\end{align*}
Furthermore, from $\| \nabla f (\vx_t) \| > \tfrac{L_0}{L_1}$ and Lemma \ref{lemma:smooth}, we obtain
\begin{align*}
    \sqrt{f (\vx_t) - f^\star} 
    \geq \frac{L_0}{L_1} \sqrt{\frac{f (\vx_t) - f^\star}{\| \nabla f (\vx_t) \|^2}}
    \geq \frac{L_0}{L_1} \sqrt{\frac{1}{2 L}}.
\end{align*}
Thus, we get
\begin{align*} 
    &f (\vx_t) - f^\star \\
    &\leq 4 L_1 \sqrt{2 L T (f (\vx_t) - f^\star)} (\| \vx_{t} - \vx^\star \|^2 - \| \vx_{t+1} - \vx^\star \|^2)  + \frac{L_1 \sigma^2}{L_0 \sqrt{T}} \sqrt{2 L (f (\vx_t) - f^\star)}.
\end{align*}
Dividing by $\tfrac{L_1 \sqrt{2 L (f (\vx_t) - f^\star)}}{L_0}$, we get
\begin{align*} 
    \frac{L_0}{L_1}\sqrt{\frac{f (\vx_t) - f^\star}{2L}} 
    &\leq 4 L_0 \sqrt{T} (\| \vx_{t} - \vx^\star \|^2 - \| \vx_{t+1} - \vx^\star \|^2)  + \frac{\sigma^2}{\sqrt{T}}.
\end{align*}
\textbf{Summing the two cases:}
Define $\mathcal{T}_1$ and $\mathcal{T}_2$ as follows:
\begin{align*}
    \mathcal{T}_1 \coloneqq \left\{ t \middle| \| \nabla f (\vx_t) \| \leq \frac{L_0}{L_1} \right\}, \;\; \mathcal{T}_2 \coloneqq \left\{ t \middle| \| \nabla f (\vx_t) \| > \frac{L_0}{L_1} \right\}.
\end{align*}
We obtain
\begin{align*}
    \frac{1}{T} \left( \sum_{t \in \mathcal{T}_1} \left( f(\vx_t) - f^\star \right) + \frac{L_0}{L_1} \sum_{t \in \mathcal{T}_2} \sqrt{ \frac{f (\vx) - f^\star}{2 L}} \right)
    &\leq \frac{4 L_0 \| \vx_0 - \vx^\star \|^2 + \sigma^2}{\sqrt{T}}.
\end{align*}
The above inequality implies that
\begin{align*}
    \frac{1}{T} \sum_{t \in \mathcal{T}_1} \left( f(\vx_t) - f^\star \right)
    &\leq \frac{4 L_0 \| \vx_0 - \vx^\star \|^2 + \sigma^2}{\sqrt{T}}, \\
    \frac{1}{T} \sum_{t \in \mathcal{T}_2} \sqrt{ f (\vx) - f^\star} 
    &\leq \frac{4 L_1 \sqrt{2 L} \| \vx_0 - \vx^\star \|^2}{\sqrt{T}} +  \frac{L_1 \sigma^2 \sqrt{2 L}}{L_0 \sqrt{T}}.
\end{align*}
Using $a^2 \geq 2 a b - b^2$, we obtain for any $b \in \mathbb{R}$
\begin{align*}
    \frac{1}{T} \sum_{t \in \mathcal{T}_1} \left( 2 b \sqrt{f(\vx_t) - f^\star} - b^2 \right)
    &\leq \frac{4 L_0 \| \vx_0 - \vx^\star \|^2 + \sigma^2}{\sqrt{T}}.
\end{align*}
Thus, when $b>0$, we obtain
\begin{align*}
    \frac{1}{T} \sum_{t \in \mathcal{T}_1} \sqrt{f(\vx_t) - f^\star}
    &\leq \frac{4 L_0 \| \vx_0 - \vx^\star \|^2 + \sigma^2}{2 b \sqrt{T}} + \frac{b}{2}.
\end{align*}
Choosing $b = \sqrt{\frac{4 L_0 \| \vx_0 - \vx^\star \|^2 + \sigma^2}{\sqrt{T}}}$, we get
\begin{align*}
    \frac{1}{T} \sum_{t \in \mathcal{T}_1} \sqrt{f(\vx_t) - f^\star}
    &\leq \sqrt{\frac{4 L_0 \| \vx_0 - \vx^\star \|^2 + \sigma^2}{\sqrt{T}}}.
\end{align*}
Thus, we get
\begin{align*}
    \frac{1}{T} \sum_{t=0}^{T-1} \sqrt{f(\vx_t) - f^\star}
    &\leq \sqrt{\frac{4 L_0 \| \vx_0 - \vx^\star \|^2 + \sigma^2}{\sqrt{T}}}
    + \frac{4 L_1 \sqrt{2 L} \| \vx_0 - \vx^\star \|^2}{\sqrt{T}} 
    + \frac{L_1 \sigma^2 \sqrt{2 L}}{L_0 \sqrt{T}}.
\end{align*}
Defining $\tau \coloneqq \argmin_t f (\vx_t)$, we get
\begin{align*}
    \sqrt{f(\vx_\tau) - f^\star}
    &\leq \sqrt{\frac{4 L_0 \| \vx_0 - \vx^\star \|^2 + \sigma^2}{\sqrt{T}}}
    + \frac{4 L_1 \sqrt{2 L} \| \vx_0 - \vx^\star \|^2}{\sqrt{T}} 
    + \frac{L_1 \sigma^2 \sqrt{2 L}}{L_0 \sqrt{T}}.
\end{align*}
Squaring the both sides, and using $(a + b)^2 \leq 2 a^2 + 2 b^2$ for all $a, b \in \mathbb{R}$, we obtain 
\begin{align*}
    f(\vx_\tau) - f^\star
    &\leq \frac{8 L_0 \| \vx_0 - \vx^\star \|^2 + 2 \sigma^2}{\sqrt{T}}
    + \frac{128 L_1^2 L \| \vx_0 - \vx^\star \|^4}{T} 
    + \frac{8 L_1^2 \sigma^4 L}{L_0^2 T}.
\end{align*}
This concludes the statement.
\end{proof}

\begin{lemma}
Assume that $f$ is convex and Assumptions \ref{assumption:smooth} and \ref{assumption:generalized_smooth} hold. Let $T$ be the number of iterations and define $\tau \coloneqq \argmin_{0\leq t \leq T-1} f(\vx_t)$. Then, gradient descent with stepsize Eq.~\eqref{eq:inexact_polyak} satisfies:
\begin{equation}
    f(\vx_{\tau}) - f (\vx^\star) \leq \mathcal{O} \left( \frac{ L_0 \| \vx_0 - \vx^\star\|^2 + \sigma^2}{\sqrt{T}} + \frac{L L_1^2 \| \vx_0 - \vx^\star \|^4}{T} + \frac{L_1^2 L \sigma^4}{L_0^2 T} \right),
\end{equation}
where $\vx^\star \coloneqq \argmin_{\vx} f (\vx)$ and $\sigma^2 \coloneqq f^\star - l^\star$.
\end{lemma}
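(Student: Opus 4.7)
The plan is to reduce the statement to the preceding Lemma \ref{lemma:inexact_polyak} by a simple dichotomy on the trajectory. That lemma already delivers precisely the target rate, but only under the precondition that $f(\vx_t) - f^\star \geq \sigma^2/\sqrt{T}$ for every $0 \leq t \leq T-1$. To remove this precondition, I would split into two cases and exploit the key design feature of Algorithm \ref{alg:proposed_method}: the returned parameter is $\vx^\text{best}$, namely the iterate achieving the smallest loss along the whole trajectory, so in the notation of the theorem $f(\vx_\tau) \leq f(\vx_t)$ for every $t$.

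In the first case, suppose the precondition holds at every iteration $0 \leq t \leq T-1$. Then invoking Lemma \ref{lemma:inexact_polyak} directly yields the bound claimed in the theorem, with explicit constants already computed in the previous lemma, and there is nothing further to do. In the second case, suppose there exists some index $s$ with $f(\vx_s) - f^\star < \sigma^2/\sqrt{T}$. Then by the best-iterate selection in Algorithm \ref{alg:proposed_method} we immediately get $f(\vx_\tau) - f^\star \leq f(\vx_s) - f^\star < \sigma^2/\sqrt{T}$, which is already absorbed by the $\sigma^2/\sqrt{T}$ term of the target rate. Taking the worse of the two cases and folding the numerical constants into $\mathcal{O}(\cdot)$ concludes the argument.

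The main conceptual point, and the reason Inexact Polyak Stepsize avoids the pitfall of the naive lower-bound substitution in Eq.~\eqref{eq:straightforward}, is this best-iterate selection. Without it, a single iteration on which $f(\vx_t) - f^\star$ dips below $\sigma^2/\sqrt{T}$ could be followed by the stepsize $(f(\vx_t) - l^\star)/(\sqrt{T}\|\nabla f(\vx_t)\|^2)$ growing large enough to push the next iterate back away from the optimum, and the rate would break. Returning $\vx^\text{best}$ freezes the lowest value attained, turning the ``all-iterates-satisfy'' premise of Lemma \ref{lemma:inexact_polyak} into a clean worst-case guarantee, which is exactly what the case split above formalizes. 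Beyond this observation, no further technical obstacle is expected; the remainder is bookkeeping with absolute constants inherited from the previous lemma.
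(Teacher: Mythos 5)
Your proposal is correct and is essentially identical to the paper's own argument: the paper also splits on whether $f(\vx_t) - f^\star \geq \sigma^2/\sqrt{T}$ holds for all $t$, invokes Lemma~\ref{lemma:inexact_polyak} in the affirmative case, and uses the best-iterate selection $f(\vx_\tau) \leq f(\vx_s)$ to absorb the other case into the $\sigma^2/\sqrt{T}$ term. No gaps; the additional commentary on why the best-iterate return is the crucial design choice is accurate but not needed for the proof itself.
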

\begin{proof}
If there exists $t$ such that $f(\vx_t) - f^\star < \tfrac{\sigma^2}{\sqrt{T}}$, we have
\begin{align*}
    f (\vx_\tau) - f^\star \leq f (\vx_t) - f^\star < \frac{\sigma^2}{\sqrt{T}}.
\end{align*}
Then, if $f(\vx_t) - f^\star \geq \tfrac{\sigma^2}{\sqrt{T}}$ for all $t$, Lemma \ref{lemma:inexact_polyak} shows that
\begin{align*}
    f(\vx_\tau) - f^\star
    &\leq \frac{8 L_0 \| \vx_0 - \vx^\star \|^2 + 2 \sigma^2}{\sqrt{T}}
    + \frac{128 L_1^2 L \| \vx_0 - \vx^\star \|^4}{T} 
    + \frac{8 L_1^2 \sigma^4 L}{L_0^2 T}.
\end{align*}
By combining the above two cases, we have the desired statement.
\end{proof}

\section{Additional theoretical result}

\begin{lemma}
\label{lemma:twice_defferential}
Let $f$ be a function such that $\| \nabla^2 f (\vx) \| \leq L_0 + L_1 \| \nabla f (\vx) \|$ holds for any $\vx$.
For any $\vx, \vy$ such that $\| \vx - \vy \| \leq \tfrac{1}{L_0}$, we have
\begin{align*}
    \| \nabla f (\vx) - \nabla f(\vy) \| \leq 2 (L_0 + L_1  \| \nabla f (\vx) \|) \| \vx -\vy \|.
\end{align*}
\end{lemma}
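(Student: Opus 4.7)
The plan is to reduce the claim to a Gr\"onwall-type estimate along the line segment between $\vx$ and $\vy$. By the fundamental theorem of calculus applied to $t\mapsto \nabla f(\vx+t(\vy-\vx))$, I would first write
\begin{equation*}
\nabla f(\vy)-\nabla f(\vx)=\int_0^1 \nabla^2 f(\vx_t)(\vy-\vx)\,dt,\qquad \vx_t\coloneqq \vx+t(\vy-\vx),
\end{equation*}
and then bound the operator norm under the hypothesis to get
\begin{equation*}
\|\nabla f(\vy)-\nabla f(\vx)\|\le \|\vy-\vx\|\int_0^1\bigl(L_0+L_1\|\nabla f(\vx_t)\|\bigr)\,dt.
\end{equation*}
Everything then reduces to showing that $\|\nabla f(\vx_t)\|$ cannot grow by more than a factor of $2$ (in the shifted sense $L_0+L_1\|\nabla f(\cdot)\|$) as $t$ traverses $[0,1]$.

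To control this growth, I would set $\phi(t)\coloneqq\|\nabla f(\vx_t)\|$ and use the hypothesis one more time to derive the differential inequality
\begin{equation*}
\phi'(t)\le \|\nabla^2 f(\vx_t)\|\cdot\|\vy-\vx\|\le \bigl(L_0+L_1\phi(t)\bigr)\|\vy-\vx\|.
\end{equation*}
Shifting by $L_0/L_1$ turns this into a linear inequality: with $\psi(t)\coloneqq \phi(t)+L_0/L_1$ I get $\psi'(t)\le L_1\|\vy-\vx\|\,\psi(t)$, whose integration (Gr\"onwall) yields
\begin{equation*}
L_0+L_1\phi(t)\le \bigl(L_0+L_1\phi(0)\bigr)\exp\!\bigl(L_1 t\,\|\vy-\vx\|\bigr).
\end{equation*}
Under the stated smallness condition $\|\vx-\vy\|\le 1/L_0$ (which I read as the natural $(L_0,L_1)$-regime $L_1\|\vx-\vy\|\le \ln 2$; the writeup just needs to match the constant used upstream), the exponential is at most $2$, giving $L_0+L_1\phi(t)\le 2\bigl(L_0+L_1\|\nabla f(\vx)\|\bigr)$ on all of $[0,1]$. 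Substituting this uniform bound back into the integral representation delivers the claim.

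The main obstacle is the Gr\"onwall step, and in particular two subtleties within it. First, $\phi=\|\nabla f\circ\vx_\cdot\|$ is only Lipschitz rather than smooth at points where $\nabla f$ vanishes, so I would either apply Gr\"onwall to $\phi^2$ (which is $C^1$ under $C^2$ regularity of $f$) and take square roots, or smooth $\|\cdot\|$ by $\sqrt{\|\cdot\|^2+\varepsilon}$ and pass to the limit. Second, one must verify that the chosen smallness radius really produces the constant $2$ rather than $e$; this is why the shift by $L_0/L_1$ is essential, since without it the extra additive $L_0$ term would not be absorbed into the multiplicative exponential factor. Once these are handled, the remainder is a one-line integration.
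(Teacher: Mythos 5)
The paper does not actually prove this lemma; it defers entirely to Lemma A.2 of \citet{zhang2020improved}, and your Gr\"onwall argument is essentially a reconstruction of that cited proof, so the approach matches. Two quantitative remarks. First, you are right to be suspicious of the stated radius: with $\|\vx-\vy\|\le 1/L_0$ the Gr\"onwall factor is $e^{L_1\|\vx-\vy\|}\le e^{L_1/L_0}$, which is unbounded when $L_1\gg L_0$, so no universal constant can come out; the radius must scale as $1/L_1$ (consistent with Assumption~\ref{assumption:generalized_smooth}), and $1/L_0$ here is evidently a typo. Second, with the radius $1/L_1$ your uniform bound $L_0+L_1\phi(t)\le (L_0+L_1\phi(0))e^{L_1 t\|\vy-\vx\|}$ only gives the factor $e$ at $t=1$, so "substituting the uniform bound" does not yield $2$; the fix is not to take the supremum but to integrate the exponential, since $\int_0^1 e^{st}\,dt=(e^{s}-1)/s\le e-1<2$ for $s=L_1\|\vy-\vx\|\le 1$, which delivers the constant $2$ without shrinking the radius to $\ln 2/L_1$. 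Your handling of the nondifferentiability of $\phi=\|\nabla f(\vx_\cdot)\|$ at zeros of the gradient (via $\phi^2$ or a smoothed norm) is the right precaution; with that and the integral refinement the argument is complete.
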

\begin{proof}
See Lemma A.2 in \citep{zhang2020improved}.    
\end{proof}

\begin{proposition}
\label{prop:generalized_smooth}
For any $L_0 \geq 0$ and $L_1 \geq 0$, $f(x) \coloneqq \tfrac{L_0 L_1^2}{72} x^4 + \tfrac{L_0}{4} x^2$ is $(L_0, L_1)$-smooth.
\end{proposition}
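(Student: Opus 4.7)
The plan is to verify $(L_0, L_1)$-smoothness directly from a pointwise bound on $f''$, after recording that $f$ is twice differentiable with $f'(x) = \tfrac{L_0 L_1^2}{18} x^3 + \tfrac{L_0}{2} x$ and $f''(x) = \tfrac{L_0 L_1^2}{6} x^2 + \tfrac{L_0}{2} \ge 0$. Two structural observations will drive the argument: $f'(x)$ has the same sign as $x$, so $|f'(x)| = \tfrac{L_0 L_1^2}{18}|x|^3 + \tfrac{L_0}{2}|x|$; and $f''$ is an even, strictly convex function that is nondecreasing in $|\xi|$. I will focus on the main case $L_0, L_1 > 0$; the degenerate cases $L_0 = 0$ (where $f \equiv 0$) and $L_1 = 0$ (where the distance constraint is vacuous and $f$ is trivially $L_0$-smooth) can be dispatched in one line.

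For any $x, y$ with $|x - y| \le \tfrac{1}{L_1}$, the mean value theorem supplies some $\xi$ between $x$ and $y$ with $|f'(x) - f'(y)| = f''(\xi)\,|x - y|$, so it suffices to show $f''(\xi) \le L_0 + L_1 |f'(x)|$. Using $|\xi - x| \le \tfrac{1}{L_1}$ and the triangle inequality, $|\xi| \le |x| + \tfrac{1}{L_1}$, and by the monotonicity of $f''$ on $[0, \infty)$,
\[
    f''(\xi) \le \tfrac{L_0 L_1^2}{6}\bigl(|x| + \tfrac{1}{L_1}\bigr)^2 + \tfrac{L_0}{2}.
\]
Setting $u \coloneqq L_1 |x| \ge 0$, the desired inequality reduces (after dividing by $L_0$) to
\[
    \tfrac{(u+1)^2}{6} + \tfrac{1}{2} \;\le\; 1 + \tfrac{u^3}{18} + \tfrac{u}{2},
\]
equivalently $u^3 - 3u^2 + 3u + 6 \ge 0$. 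The left-hand side factors neatly as $(u-1)^3 + 7$, and since $(u-1)^3 \ge -1$ whenever $u \ge 0$, we get $(u-1)^3 + 7 \ge 6 > 0$, closing the proof.

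The delicate point I expect is that the coefficients $\tfrac{1}{72}$ and $\tfrac{1}{4}$ are tightly calibrated: a loose bound such as $|\xi| \le 2|x|$ would fail for large $|x|$, because then the dominant $u^3$ term would appear with the wrong sign on the right-hand side and $f''(\xi)$ would outpace $L_1 |f'(x)|$. The correct move is to use $|\xi - x| \le \tfrac{1}{L_1}$ without relaxing it, which is exactly what turns the target polynomial into a clean shifted cube. Once that bookkeeping is done, no further work is needed.
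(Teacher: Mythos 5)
Your proof is correct, and it takes a genuinely different route from the paper's. The paper never touches the mean value theorem directly: it establishes the pointwise Hessian bound $|f''(x)| \leq \tfrac{L_0}{2} + \tfrac{L_1}{2}|f'(x)|$ via the AM--GM-type inequality $|x| \leq \tfrac{L_1}{6}x^2 + \tfrac{3}{2L_1}$, and then invokes an external black-box lemma (Lemma A.2 of Zhang et al., restated as Lemma \ref{lemma:twice_defferential}) which converts any such Hessian-versus-gradient bound into the $(L_0,L_1)$-smoothness condition at the cost of a factor of $2$ --- which is exactly why the paper needs the halved constants $\tfrac{L_0}{2}, \tfrac{L_1}{2}$ in its intermediate bound. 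You instead apply the one-dimensional mean value theorem, control $|\xi| \leq |x| + \tfrac{1}{L_1}$ using the radius constraint, and reduce everything to the polynomial inequality $u^3 - 3u^2 + 3u + 6 = (u-1)^3 + 7 \geq 0$ for $u \geq 0$, which checks out (as does the identity $|f'(x)| = \tfrac{L_0L_1^2}{18}|x|^3 + \tfrac{L_0}{2}|x|$, since $f'(x)$ and $x$ share a sign). Your argument is fully self-contained and avoids the factor-of-two bookkeeping; the paper's approach is less elementary but transfers immediately to the multivariate setting and to any function satisfying the Hessian bound, which is presumably why the authors preferred it. One small quibble with your closing commentary: a hypothetical bound $|\xi| \leq 2|x|$ would not fail for large $|x|$ (there the cubic on the right dominates the resulting quadratic); it fails because it is simply not a valid bound on $\xi$ when $|x|$ is small relative to $\tfrac{1}{L_1}$. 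This does not affect the proof itself, which is airtight.
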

\begin{proof}
Since $f (x)$ is twice differentiable, we have
\begin{align*}
    | \nabla^2 f (x) |
    &= \frac{L_0 L_1^2}{6} x^2 + \frac{L_0}{2}.
\end{align*}
Using $\tfrac{L_1}{6} x^2 + \tfrac{3}{2 L_1} \geq |x|$,  we obtain
\begin{align*}
    | \nabla^2 f (x) |
    &\leq \frac{L_0 L_1^2}{6} \left(\frac{L_1}{6} x^2 + \frac{3}{2 L_1} \right) |x| + L_0 \\
    &= \frac{L_1}{2} \left| \frac{L_0 L_1^2}{18} x^3 + \frac{L_0}{2} x \right| + \frac{L_0}{2} \\
    &= \frac{L_1}{2} | \nabla f (x) | + \frac{L_0}{2}.    
\end{align*}
From Lemma \ref{lemma:twice_defferential}, we have the desired statement.
\end{proof}

\newpage
\section{Hyperparameter settings}
\label{sec:hyperparameter}

\subsection{Synthetic function}
In our experiments, we ran the clipped gradient descent with the following hyperparameters and tuned the hyperparameters by grid search.
\begin{table}[h!]
\centering
\vskip - 0.2 in 
\caption{Hyperparameter settings for clipped gradient descent.}
\begin{tabular}{lc}
\toprule
Learning Rate               & $\{ 1, 1.0 \times 10^{-1}, \cdots, 1.0 \times 10^{-8} \}$ \\
Gradient Clipping Threshold & $ \{ 0.01, 0.1, 1, 5, 10, 15, 20, \infty \}$ \\
\bottomrule
\end{tabular}
\vskip - 0.1 in 
\end{table}

\begin{table}[h!]
\centering
\vskip - 0.2 in 
\caption{Hyperparameters selected by grid search.}
\label{table:best_hypara_synthetic}
\begin{tabular}{lcccc}
\toprule
 & \textbf{Gradient Descent} & & \multicolumn{2}{c}{\textbf{Clipped Gradient Descent}} \\
 & Learning Rate & & Learning Rate & Gradient Clipping Threshold \\
\midrule
$L_1=1$     & $1.0 \times 10^{-1}$ & & $0.1$ & $20$ \\
$L_1=10$    & $1.0 \times 10^{-3}$ & & $0.1$ & $10$ \\
$L_1=100$   & $1.0 \times 10^{-5}$ & & $0.1$ & $10$ \\
$L_1=1000$  & $1.0 \times 10^{-7}$ & & $0.1$ & $10$ \\
\bottomrule
\end{tabular}
\vskip - 0.1 in 
\end{table}

\subsection{Neural networks}

In our experiments, we used the following training configuration:
\begin{itemize}
    \item \textbf{LSTM:} \url{https://github.com/salesforce/awd-lstm-lm}
    \item \textbf{Nano-GPT:} \url{https://github.com/karpathy/nanoGPT}
    \item \textbf{T5:} \url{https://github.com/PiotrNawrot/nanoT5}
\end{itemize}
We ran all experiments on an A100 GPU.
For Clipped SGD and SGD, we tuned the stepsize and gradient clipping threshold using the grid search.
See Tables \ref{table:hypara_lstm}, \ref{table:hypara_nanogpt}, and \ref{table:hypara_t5} for detailed hyperparameter settings, and see Table \ref{table:best_hypara} for the selected hyperparameters.

\begin{table}[h!]
\centering
\vskip - 0.1 in 
\caption{Hyperparameter settings for LSTM.}
\label{table:hypara_lstm}
\begin{tabular}{lc}
\toprule
Learning Rate               &  $\{ 100, 50, 10, 1, 0.1, 0.01\}$\\
Gradient Clipping Threshold &  $\{ 0.5, 1, \cdots, 4.5, 5, \infty\}$\\
Batch Size & 80 \\
\bottomrule
\end{tabular}
\vskip - 0.1 in 
\end{table}

\begin{table}[h!]
\centering
\vskip - 0.1 in 
\caption{Hyperparameter settings for Nano-GPT.}
\label{table:hypara_nanogpt}
\begin{tabular}{lc}
\toprule
Learning Rate               &  $\{ 1, 0.5, 0.1, \cdots, 0.0005, 0.0001 \}$\\
Gradient Clipping Threshold &  $\{ 1, 2, \cdots, 9, 10, \infty\}$ \\
Batch Size & 64 \\
\bottomrule
\end{tabular}
\vskip - 0.1 in 
\end{table}

\begin{table}[h!]
\centering
\vskip - 0.1 in 
\caption{Hyperparameter settings for T5.}
\label{table:hypara_t5}
\begin{tabular}{lc}
\toprule
Learning Rate               &  $\{ 5.0, 1.0, 0.5, 0.1, 0.05 \}$\\
Gradient Clipping Threshold &  $\{ 1, 2, 3, \infty\}$\\
Batch Size & 128 \\
\bottomrule
\end{tabular}
\vskip - 0.1 in 
\end{table}

\begin{table}[h!]
\centering
\vskip - 0.1 in 
\caption{Hyperparameters selected by grid search. Three values correspond to the selected hyperparameters for different seed values.}
\label{table:best_hypara}
\begin{tabular}{lcccc}
\toprule
 & \textbf{Gradient Descent} & & \multicolumn{2}{c}{\textbf{Clipped Gradient Descent}} \\
 & Learning Rate & & Learning Rate & Gradient Clipping Threshold \\
\midrule
LSTM     & $10$ / $10$ / $10$ & & $10$ / $50$ / $50$ & $0.5$ / $1$ / $0.5$ \\
Nano-GPT & $0.001$ / $0.001$ / $0.001$ & & $0.001$ / $0.001$ / $0.001$ & $\infty$ / $10$ / $10$ \\
T5       & $0.1$ / $0.1$ / $0.05$ & & $1$ / $1$ / $1$ & $2$ / $2$ / $2$ \\
\bottomrule
\end{tabular}
\end{table}

\newpage
\section{Additional numerical evaluation}
\label{sec:additional_experiments}

\begin{figure}[h]
\centering
\hfill
\centering
\vskip - 0.1 in
\begin{subfigure}[b]{\textwidth}
    \includegraphics[width=0.37\textwidth]{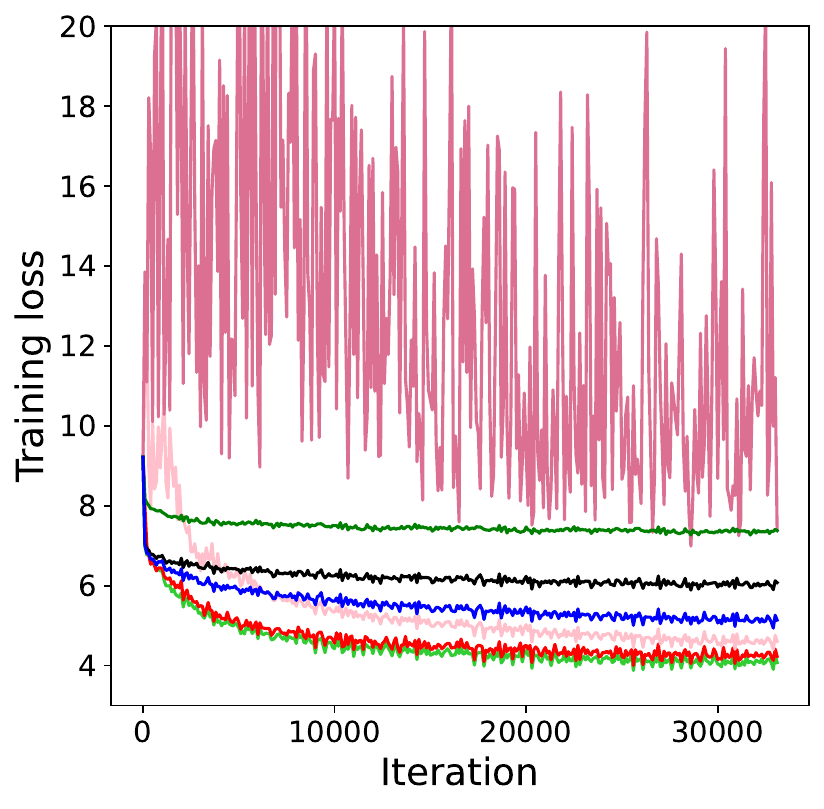}
    \hfill
    \includegraphics[width=0.61\textwidth]{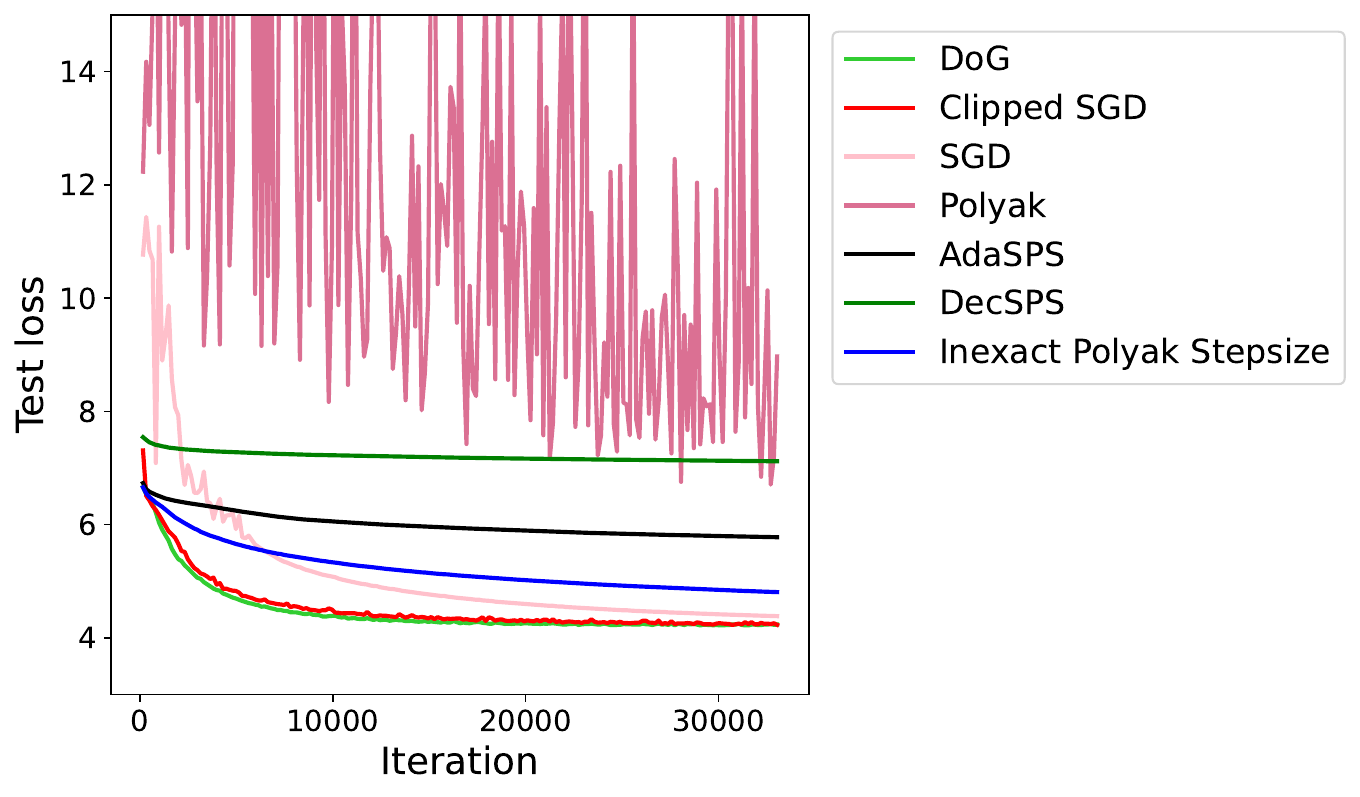}
    \vskip - 0.1 in
\caption{LSTM}
\vskip - 0.2 in
\end{subfigure}
\begin{subfigure}[b]{\textwidth}
    \includegraphics[width=0.37\textwidth]{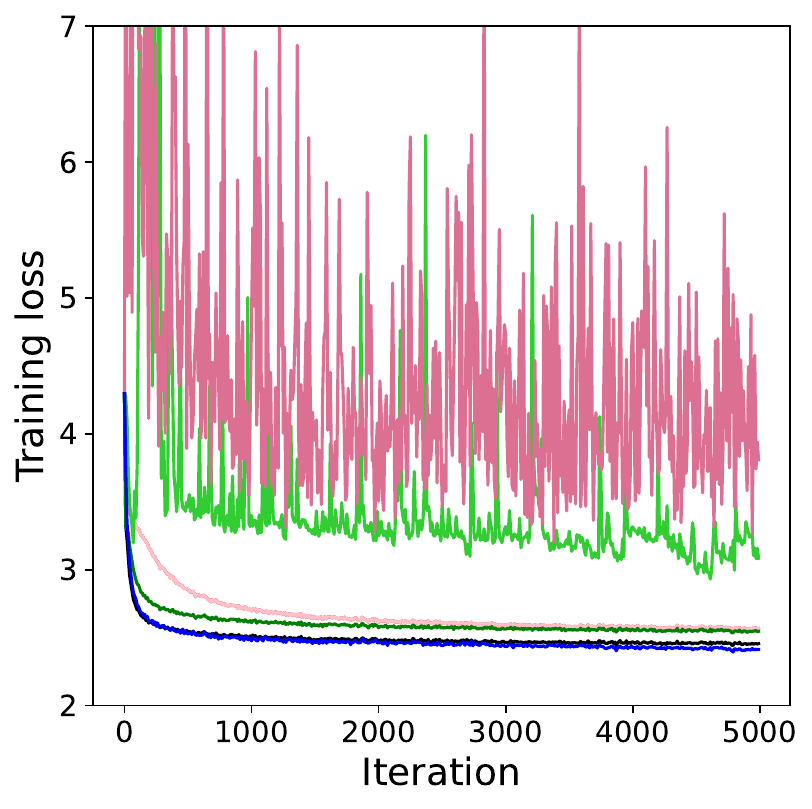}
    \hfill
    \includegraphics[width=0.61\textwidth]{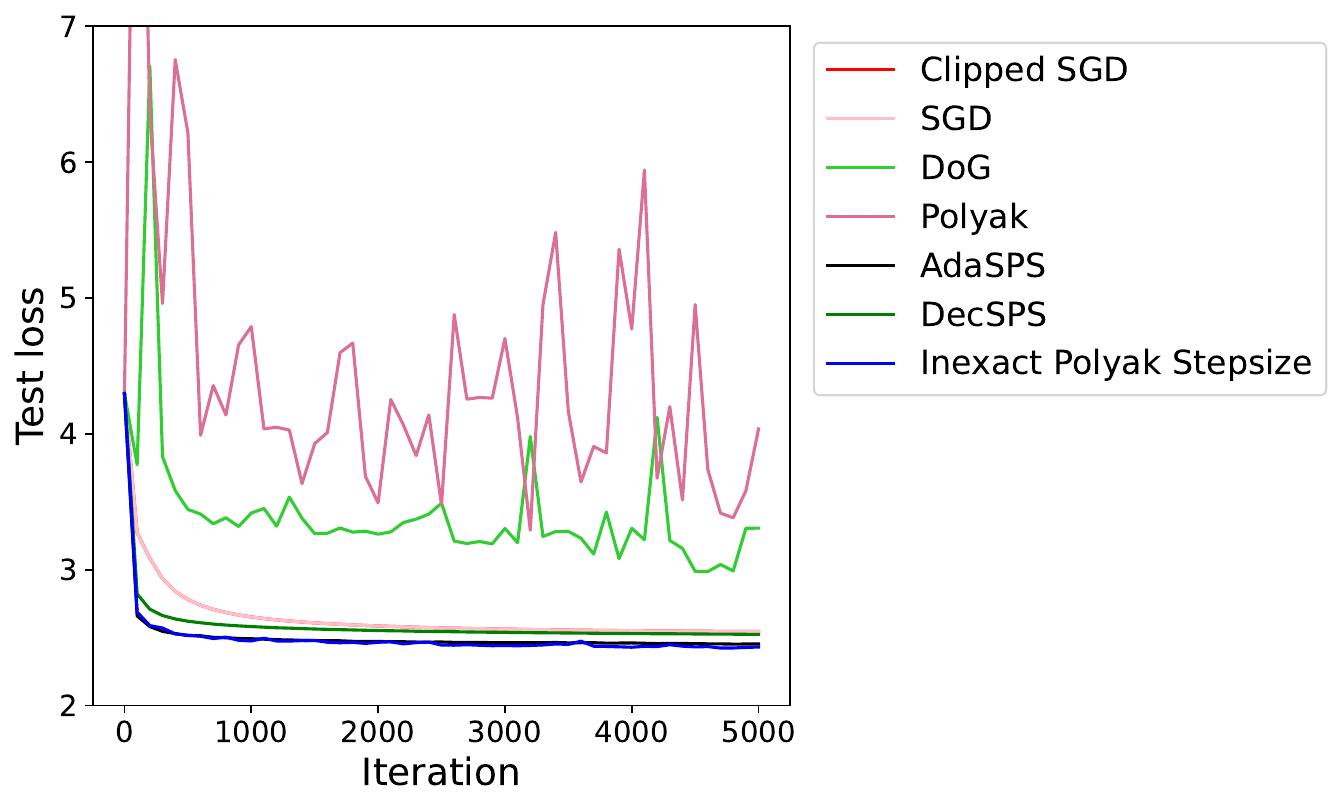}
    \vskip - 0.1 in
\caption{Nano-GPT}
\vskip - 0.2 in
\end{subfigure}
\caption{Loss curves for LSTM and Nano-GPT. We plotted the training loss per $100$, $10$, and $10$ iterations for LSTM, Nano-GPT, and T5, respectively. We plotted the test loss per one epoch, $100$ iterations, and $200$ iterations, respectively.}
\label{fig:lstm}      
\end{figure}

\begin{figure}[h]
\centering
\hfill
\centering
\vskip - 0.1 in
\begin{subfigure}[b]{\textwidth}
    \includegraphics[width=0.37\textwidth]{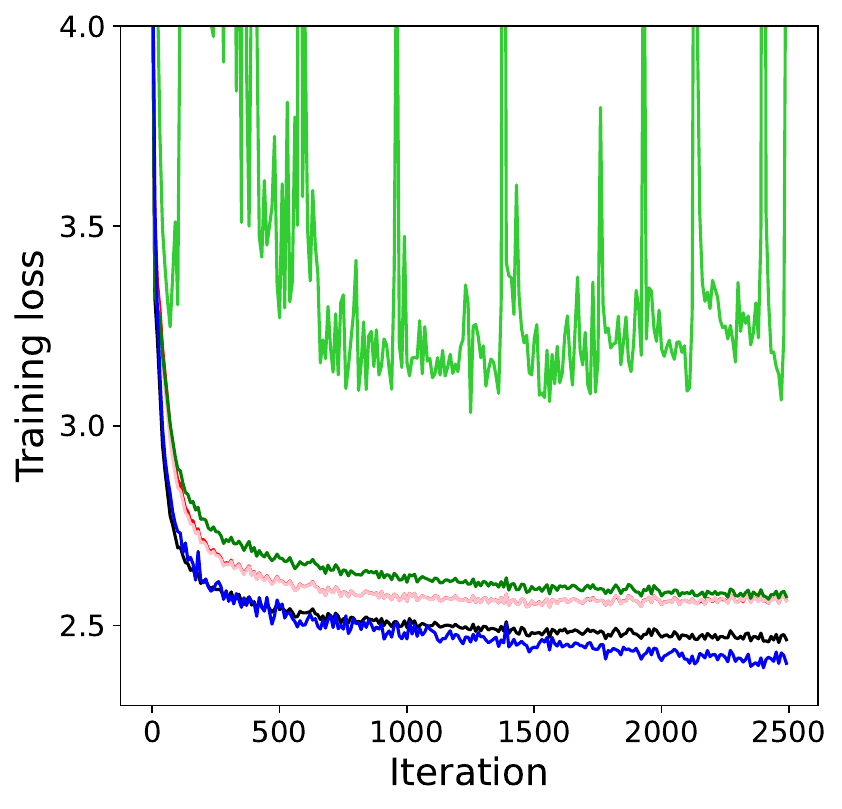}
    \hfill
    \includegraphics[width=0.61\textwidth]{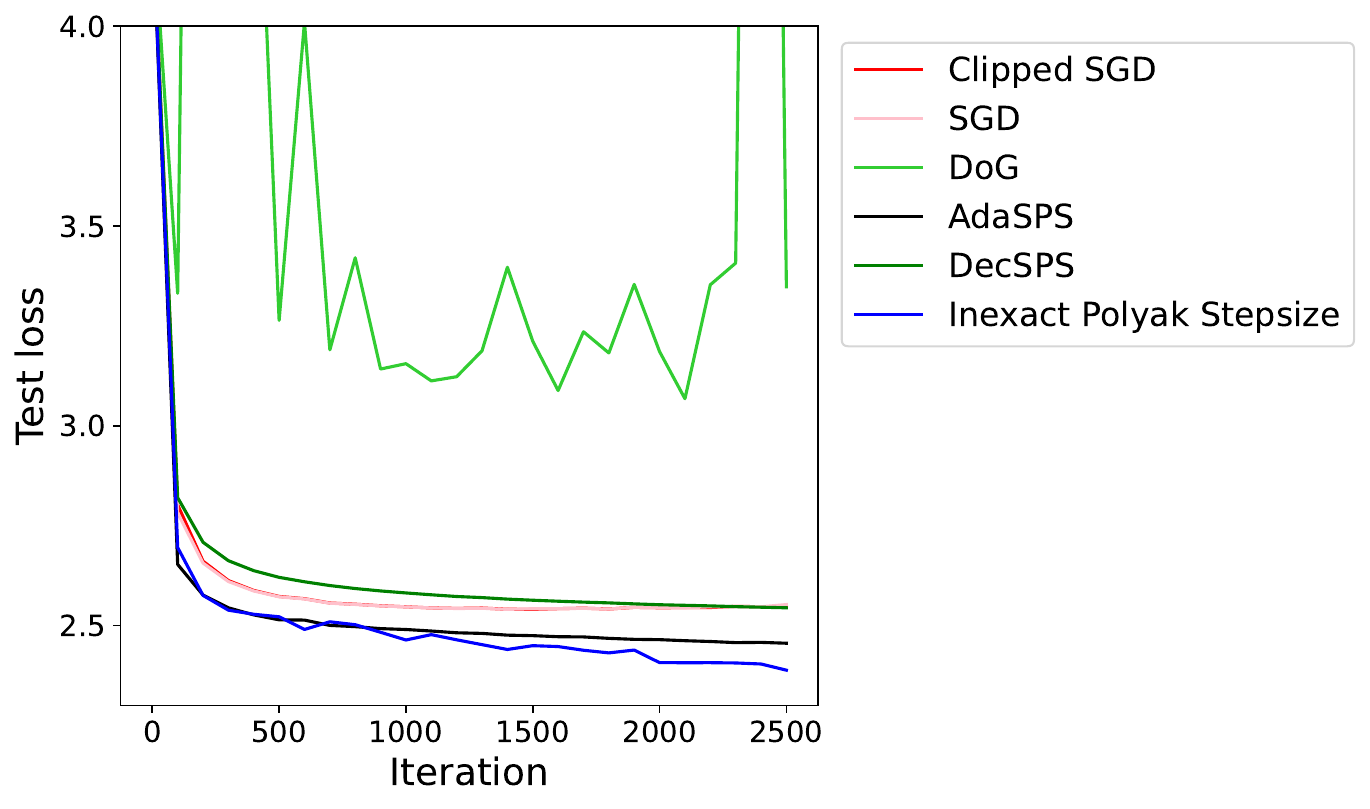}
    \vskip - 0.1 in
\caption{$T=2500$}
\vskip - 0.2 in
\end{subfigure}
\begin{subfigure}[b]{\textwidth}
    \includegraphics[width=0.37\textwidth]{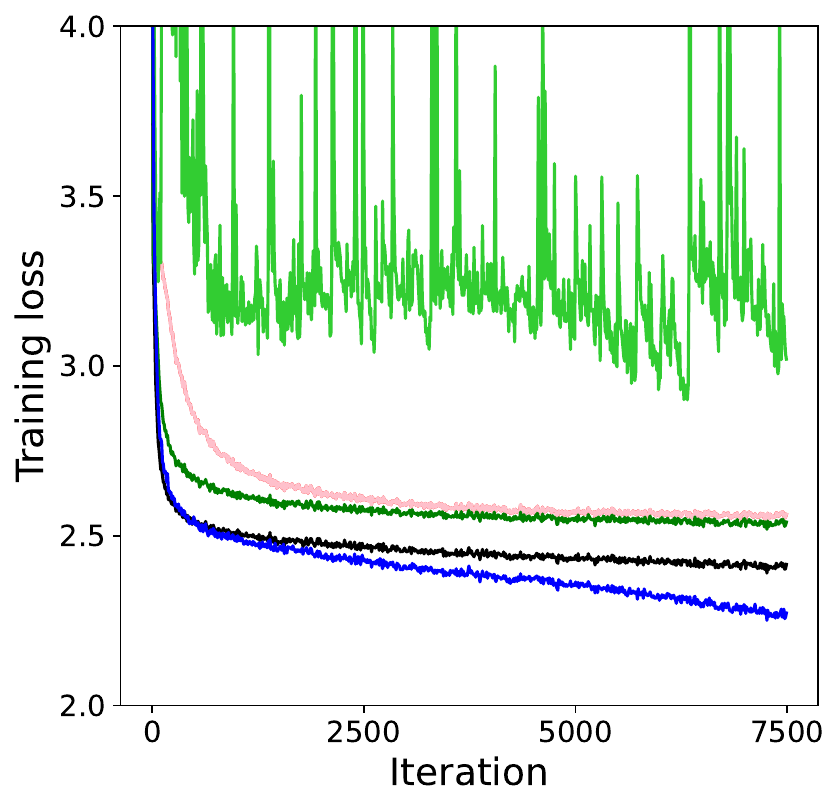}
    \hfill
    \includegraphics[width=0.61\textwidth]{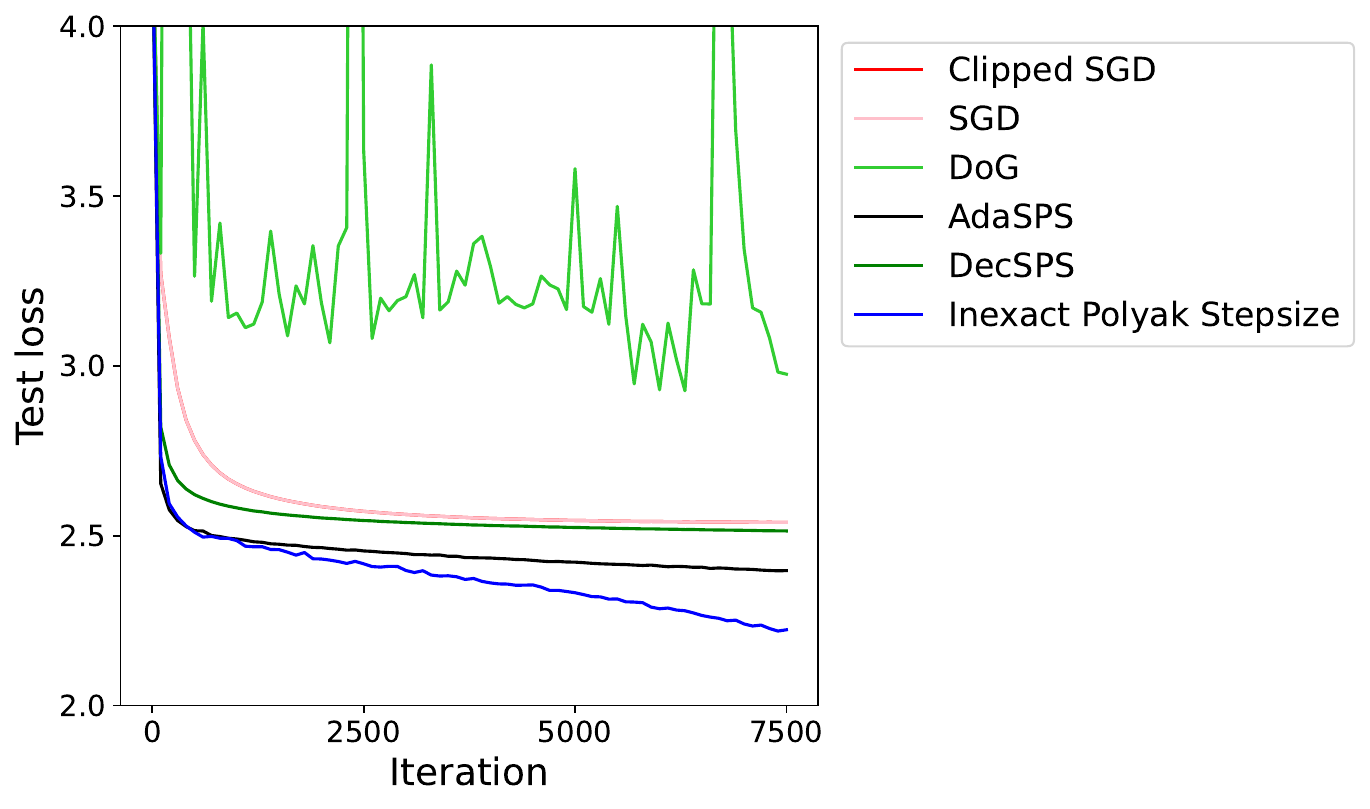}
    \vskip - 0.1 in
\caption{$T=7500$}
\vskip - 0.2 in
\end{subfigure}
\caption{Loss curves for Nano-GPT with different $T$.}
\end{figure}

\end{document}